\documentclass{article}

\usepackage{microtype}
\usepackage{graphicx}
\usepackage{booktabs} %

\usepackage{hyperref}

\usepackage[noend]{algpseudocode}
\usepackage[arxiv]{icml2024}

\usepackage{amsmath}
\usepackage{amssymb}
\usepackage{mathtools}
\usepackage{amsthm}

\usepackage[capitalize,noabbrev]{cleveref}

\usepackage[textsize=tiny]{todonotes}

\icmltitlerunning{Easy as ABCs: Unifying BQL and CFR}

\usepackage{xcolor} 
\usepackage{dsfont}
\usepackage{caption}
\usepackage{subcaption}
\usepackage{graphicx}

\newcommand*\Let[2]{\State #1 $\gets$ #2}
\algrenewcommand\alglinenumber[1]{
    {\sf\footnotesize\addfontfeatures{Colour=888888,Numbers=Monospaced}#1}}
\algrenewcommand\algorithmicrequire{\textbf{Precondition:}}
\algrenewcommand\algorithmicensure{\textbf{Postcondition:}}

\theoremstyle{plain}
\newtheorem{theorem}{Theorem}[section]

\newtheorem{lemma}[theorem]{Lemma}
\newtheorem{corollary}[theorem]{Corollary}
\theoremstyle{definition}
\newtheorem{definition}[theorem]{Definition}

\theoremstyle{remark}

\DeclarePairedDelimiter\autobracket{(}{)}
\DeclarePairedDelimiter\autosqbracket{[}{]}
\newcommand{\br}[1]{\autobracket*{#1}}
\newcommand{\sq}[1]{\autosqbracket*{#1}}
\newcommand{\norm}[1]{\left\lVert#1\right\rVert}

\newcommand{\E}[2]{\mathbb{E}_{#1}\sq{#2}}
\DeclareMathOperator*{\argmax}{arg\,max}

\definecolor{english}{rgb}{0.0, 0.5, 0.0} 
\definecolor{darkgreen}{rgb}{0.2, 0.6, 0.2}

\if 1
\usepackage{color}

\newcount\Comments  %
\Comments = 1
\newcommand{\kibitz}[2]{\ifnum\Comments=1{\color{#1}{#2}}\fi}

\newcount\Commentss  %
\Commentss = 1
\newcommand{\kibitzs}[2]{\ifnum\Commentss=1{\color{#1}{#2}}\fi}
\definecolor{english}{rgb}{0.0, 0.5, 0.0}

\fi

\newcommand{\alg}[0]{ABCs}

\newcommand{\cs}[0]{child stationarity}

\newcommand{\cfr}[0]{MAX-CFR}

\newcommand{\bootcfr}[0]{Boot-CFR}

\DeclarePairedDelimiter\floor{\lfloor}{\rfloor}
\newcommand{\arxiv}[1]{}

\definecolor{light-gray}{gray}{0.95}

\begin{document}

\twocolumn[
\icmltitle{Easy as ABCs: Unifying Boltzmann Q-Learning and Counterfactual Regret Minimization}

\icmlsetsymbol{equal}{*}

\begin{icmlauthorlist}
\icmlauthor{Luca D'Amico-Wong}{equal,harvard}
\icmlauthor{Hugh Zhang}{equal,harvard}
\icmlauthor{Marc Lanctot}{google}
\icmlauthor{David C. Parkes}{harvard,partlygoogle}
\end{icmlauthorlist}

\icmlaffiliation{harvard}{Harvard University}
\icmlaffiliation{google}{Google Deepmind}
\icmlaffiliation{partlygoogle}{Work partially done while at Google Deepmind}

\icmlcorrespondingauthor{Luca D'Amico-Wong}{ldamicowong@college.harvard.edu}
\icmlcorrespondingauthor{Hugh Zhang}{hughzhang@seas.harvard.edu}

\icmlkeywords{counterfactual regret minimization, q-learning}

\vskip 0.3in
]

\printAffiliationsAndNotice{\icmlEqualContribution} %

\begin{abstract}
We propose {\em \alg{}} (\textbf{A}daptive \textbf{B}ranching through \textbf{C}hild \textbf{s}tationarity), a best-of-both-worlds algorithm combining {\em Boltzmann Q-learning} (BQL), a classic reinforcement learning algorithm for single-agent domains, and {\em counterfactual regret minimization} (CFR), a central
 algorithm for learning in multi-agent domains. \alg{} adaptively chooses what fraction of the environment to explore each iteration by measuring the stationarity of the environment's reward and transition dynamics. In Markov decision processes, \alg{} converges to the optimal policy with 
at most an $O(A)$ factor slowdown compared to BQL, where $A$ is the number of actions in the environment. %
In two-player zero-sum games, \alg{} is guaranteed to converge to a Nash equilibrium (assuming access to a perfect oracle for detecting stationarity), while BQL has no such guarantees. Empirically, \alg{} demonstrates strong performance when benchmarked across environments drawn from the OpenSpiel game library and OpenAI Gym and exceeds all prior methods in environments which are neither fully stationary nor fully nonstationary.
\end{abstract}

\section{Introduction}
The ultimate dream of reinforcement learning (RL) is a general algorithm that can learn in any environment.
Nevertheless, present-day RL often requires assuming that the environment is stationary (i.e. that the transition and reward dynamics do not change over time).
When this assumption is violated, many RL methods, such as Boltzmann Q-Learning (BQL), fail to learn good policies or even converge at all, both in theory and in practice \cite{zinkevich2008regret,LaurentEtAl2011NonMarkovian,brown2020combining}.

Meanwhile, breakthroughs in no-regret learning, such as counterfactual regret minimization (CFR) \cite{zinkevich2008regret}, have led to tremendous progress in imperfect-information, multi-agent games like Poker and Diplomacy \cite{moravcik2017deepstack, brown2018superhuman, brown2019superhuman, metafundamentalairesearchdiplomacyteamfair2022humanlevel, bakhtin2022mastering}. Such algorithms are able to guarantee convergence to Nash equilibria in two-player zero-sum games, which are typically not Markov Decision Processes (MDPs). However, CFR has poor scaling properties due to the need to perform updates across the entire game tree at \emph{every} learning iteration, as opposed to only across a single trajectory like BQL.
As a result, CFR algorithms are typically substantially less efficient than their RL counterparts when used on stationary environments such as MDPs.
While Monte Carlo based methods such as {\em MCCFR} have been proposed to help alleviate this issue \cite{lanctot2009monte}, CFR algorithms often remain impractical in even toy reinforcement learning environments, such as Cartpole or the Arcade Learning Environment~\cite{sutton2018reinforcement,bellemare13arcade}.

We propose \alg{} (\textbf{A}daptive \textbf{B}ranching through \textbf{C}hild \textbf{s}tationarity), a best-of-both-worlds approach that combines the strengths of BQL and CFR to create a single, versatile algorithm capable of learning in both stationary and nonstationary environments.
The key insight behind \alg{} is to dynamically adapt the algorithm's learning strategy by measuring the stationarity of the environment. If an information state (a set of observationally equivalent states) is deemed to be approximately stationary, \alg{} performs a relatively cheap BQL-like update, only exploring a single trajectory. On the other hand, if it is nonstationary, \alg{} applies a more expensive CFR-like update, branching out all possible actions, to preserve convergence guarantees. This selective updating mechanism enables \alg{} to exploit the efficiency of BQL in stationary settings while maintaining the robustness of CFR in nonstationary environments, only exploring the game tree more thoroughly when necessary.

The precise notion of stationarity that \alg{} tests for is \emph{\cs{}}, a weaker notion than the Markovian stationarity typically assumed in MDPs. In an MDP, the reward and transition functions must remain stationary for the full environment, regardless of the policy chosen by the agent. Instead, \cs{} isolates the transition associated with a specific infostate $s$ and action $a$ and requires only that this transition remain stationary with respect to the policies encountered over the course of learning. The primary advantage of testing for child stationarity over full Markov stationarity is that this allows \alg{} to run efficiently in \emph{partially stationary} environments. 
In empirical experiments across Cartpole, Kuhn/Leduc poker, and a third novel environment combining the two to create a partially stationary environment, we find that \alg{} performs comparably to BQL on stationary environments while still guaranteeing convergence to equilibria in all environments. Moreover, in the partially stationary experiments, \alg{} outperforms both methods. Theoretically, under a slightly stronger assumption that we are given access to a perfect oracle for detecting child stationarity, we show that running a (fast) BQL update for transitions that satisfy \cs{} allows convergence to equilibria even if the remainder of the environment is nonstationary. 
\ifdefined\isarxiv
For reproducibility, all code is provided at \href{https://github.com/lucadwong/abcs}{https://github.com/lucadwong/abcs}.
\else
\fi

\subsection{Related Work}
\label{sec:related}
Discovering general algorithms capable of learning in varied environments is a common theme in RL and more generally in machine learning. For example, DQN~\cite{Mnih2015DQN} and its successors %
\cite{Wang16Dueling,BellemareDistDQN,Hessel18Rainbow}
are capable of obtaining human level performance on a test suite of games drawn from the Arcade Learning Environment, and algorithms such as AlphaZero \cite{silver2017mastering}  attain superhuman performance on perfect information games such as Go, Shogi, and 
Chess. However, such algorithms still fail on multi-agent imperfect information settings such as Poker~\cite{brown2020combining}.
More recently, several algorithms such as Player of Games (PoG) \cite{Schmid21PoG}, REBEL \cite{brown2020combining}, and MMD \cite{sokota2023unified} have been able to simultaneously achieve reasonable performance on both perfect and imperfect information games. However, unlike \alg, none of these algorithms adapt to the stationarity of their environment. As such, they can neither guarantee performance comparable to their reinforcement learning counterparts in MDPs nor efficiently allocate resources in environments which are partially stationary, as \alg{} is able to do.

\section{Preliminaries}
\subsection{Markov Decision Processes}
\label{sec:mdp}

Classical single-agent RL typically assumes that the learning environment is a \textit{Markov Decision Process} (MDP). An MDP is defined via a tuple $(S, A, T, R, \gamma)$. $S$ and $A$ are the set of  states and actions, respectively. The state transition function, $T(s^\prime \mid  s, a)$, defines the probability of transitioning to state $s^\prime$ after taking action $a$ at state $s$.
$R(r \mid s, a, s')$ is the reward function, which specifies the reward for taking action $a$ at state $s$, given that we transition to state $s'$. Importantly, in an MDP, both the transition and the reward functions are assumed to be \emph{stationary/fixed}.
$\gamma \in [0, 1]$ is the discount factor determining the importance of future rewards relative to immediate rewards.
In an {\em episodic MDP}, the agent interacts with the environment across each of separate
{\em  episodes} via a
{\em  policy}, which is a function $\pi: S \rightarrow \Delta A$, mapping
 states to probabilities over valid actions.
In a \textit{Partially Observable Markov Decision Process} (POMDP), the tuple contains an additional element $\mathcal{H}$. In a POMDP, the agent only observes {\em information states} $S$ while the true hidden states $\mathcal{H}$ of the world remain unknown. The transition and reward dynamics of the environment  follow an MDP over the  hidden states $\mathcal{H}$, but the agent's policy $\pi: S \to \Delta A$ must depend on the observed infostates rather than hidden states. However, since POMDPs still only contain a single agent and fix the transition probabilities over time, they are still stationary environments, and there exist standard methods for reducing POMDPs into MDPs \cite{Shani13survey}.

\subsection{Finite Extensive-Form Games}
While MDPs and POMDPs model a large class of single-agent environments, the stationarity assumption can easily fail 
 in multi-agent environments. Specifically, from a single agent's perspective, the transition and reward dynamics 
are no longer fixed as they now depend on the other agents, who are also learning. 
To capture these multi-agent environments, we cast our environment as a {\em finite, imperfect information, extensive-form game with perfect recall}.

In defining this, we have a set of $M$ players (or agents), and an additional \emph{chance player}, denoted by $c$, whose moves capture any potential stochasticity in the game.
There is a finite set $\mathcal{H}$ of possible %
\emph{histories/hidden states}, corresponding to a finite sequence of states encountered and actions taken by \emph{all} players in the game. Let $h \sqsubseteq h'$ denote that $h$ is a prefix of $h'$, as sequences. The set of \emph{terminal histories}, $Z \subseteq \mathcal{H}$, is the set of histories where the game terminates. For each history $h \in \mathcal{H}$, we let $r_i(h)$ denote the \emph{reward} obtained by player $i$ upon reaching history $h$.
The total \emph{utility} in a given episode for player $i$ after reaching terminal history $z \in Z$ is  $u_i(z) = \sum_{h_t \sqsubseteq z} \gamma^{t} r_i(h_t)$, where $h_t$ denotes the $t+1$st longest prefix of $z$ (i.e., $t$ indexes a step within an episode).
 Let $u_i(h, h')$ denote the total utility for player $i$ when reaching history $h'$, starting from history $h$. 
Many of the guarantees in this paper focus on two-player zero-sum games, which have the additional property that the sum of the utility functions at any $z \in Z$ for the two players is guaranteed to be $0$.
 
To model imperfect information, we define the set of \emph{infostates} $S$ as a partition of the possible histories $\mathcal{H}$. Let $S(h)$ denote the \emph{information states} (abbreviated infostates) corresponding to history $h$. There is a \emph{player function} $P: S \to \{1, \dots, M\} \cup \{c\}$ that maps infostates to the player whose turn it is to move. Each infostate $s \in S$ is composed of states that are observationally equivalent to the player $P(s)$ whose turn it is to play. As is standard, we assume   \emph{perfect recall} in that players never forget information obtained through prior states and actions \citep{zinkevich2008regret, brown2018superhuman, moravcik2017deepstack, celli2020noregret}. 

Let $\mathcal{A}(s)$ denote the set of possible actions available to player $P(s)$ at infostate $s$. We denote the policy of player $i$ as $\pi_i: S\rightarrow\Delta \mathcal{A}$ and let $\pi = \{\pi_1, \cdots \pi_N\}$ denote the \emph{joint policy}. 
Let $\mathcal{H}_\pi(s)$ denote the distribution of true hidden states corresponding to infostate $s$, assuming all players play according to joint policy $\pi$.

In order to bridge the gap between POMDPs and general extensive-form games, it will  be helpful to recast our multi-agent environment into $|M|$ separate single-agent environments. From the perspective of any single agent $i$, the policy of the other agents $\pi_{-i}$ uniquely defines a POMDP which agent $i$ interacts with alone. To construct this POMDP, after player $i$ takes an action, we simulate the play of the other agents according to $\pi_{-i}$ and return the state/rewards corresponding to the point at which it is next $i$'s turn to play. Within these single-agent POMDPs, defined by the joint policy $\pi$, we use $T_{\pi}\br{h'\mid h, a}$ to denote the probability\footnote{Note that these transition probabilities are only defined for the player whose turn it is to play at history $h$} that $h'$ is the next hidden state after player $i$ plays action $a$ under true hidden state $h$.
 We can similarly define the transition probability to infostate $s'$ from hidden state $h$ as $T_{\pi}\br{s'\mid h, a} = \sum_{h' \in \mathcal{H}_\pi(s')} \mathds{1}\br{S\br{h'} = s'}T_{\pi}\br{h'\mid h, a}$, and the transition probabilities  from infostate to infostate
 as $T_{\pi}\br{s'\mid s, a} = \E{h \sim \mathcal{H}_\pi(s)}{T_{\pi}\br{s'\mid h, a}}$. Analogously, let $R_{\pi}\br{r\mid s, a, s'}$ denote the probability of obtaining immediate reward $r$ after taking action $a$ at infostate $s$ under joint policy $\pi$ and transitioning to state $s'$. For the above transitions to always be well defined, we require that all policies be fully mixed, in that $\pi(a \mid s) > 0$ for all valid actions $a$ and all infostates $s$. It is important to note that the $h', s'$ above do not refer to the next history/infostate seen in the full game but rather the next  in the single-agent POMDP of the current player---i.e., the agent who actually takes action $a$.

\section{Unifying Q-Learning and Counterfactual Regret Minimization}

The following notation will be useful for describing both BQL and CFR.
Let $\eta^{\pi}(h)$ be the probability of reaching history $h$ if all players play policy $\pi$ from the start of the game. Let $\eta^{\pi}_{-i}(h), \eta^{\pi}_i(h)$ denote the contribution of all players except player $i$ and the contribution of player $i$ to this probability respectively. Analogously, let $\eta^{\pi}(s) = \sum_h \mathds{1}\br{S(h) = h} \eta^{\pi}(h)$ be the probability of reaching infostate $s$ and $\eta^{\pi}(h, h')$ be the probability of reaching history $h'$ starting from history $h$. Finally, let $Z_s$ denote the set of all $(h, z)$ such that $S(h) = s$ and $z \in Z$ is a possible terminal history when starting from $h$.

\subsection{Boltzmann Q-Learning}

Boltzmann Q-learning (BQL) is a variant of the standard Q-learning algorithm, where Boltzmann exploration is used as the exploration policy. Formally, we  define the value $V$ of a state and the $Q$-value of an action at that state as:
\begin{align*}
    V_i^{\pi}(s) &= \sum_{(h, z) \in Z_s} \frac{\eta^{\pi}(h)}{\eta^{\pi}(s)} \eta^{\pi}(h, z) u_i(h, z) \\
    Q^{\pi}(s, a) &= \underset{{\substack{s' \sim T_{\pi}(s'|s, a) \\ r \sim R_{\pi}(r|s, a, s')}}}{\mathbb{E}} \left[r + \gamma V_{i}^{\pi}(s')\right]
\end{align*}
where the $i$ in the definition of $Q^\pi(s, a)$ refers to the player who plays at state $s$.

We adapt BQL to the multi-agent setting as follows. 
At each iteration, BQL first freezes the policies of each agent. It then samples a single trajectory for each agent. These trajectories take the form of $(s, a, r, s')$ tuples, recording the reward $r$ and new infostate $s'$ observed after taking action $a$ at infostate $s$. Q-values are updated for each $(s, a)$ pair in the trajectory using a temporal difference update $Q(s, a) \leftarrow (1-\alpha) Q(s, a) + \alpha (r + \gamma \max_{a'} Q(s', a'))$, where $\alpha$ denotes the learning rate.
These average Q-values are stored, and at the end of each iteration $n$, a new joint policy $
\pi^{n+1}(s, a) = \frac{\exp\left(Q(s, a)/\tau\right)}{\sum_{a' \in A} \exp\left(Q(s, a')/\tau\right)}$ is computed, where $\tau$ is the temperature parameter.
While BQL converges to the optimal policy in MDPs and in perfect information games (given a suitable temperature schedule) \cite{cesa-bianchi2017boltzmann, singh2000convergence}, it fails
 to converge to a Nash equilibrium in imperfect-information multi-agent games such as Poker \cite{brown2020combining}. %

\subsection{Counterfactual Regret Minimization}

Counterfactual regret minimization (CFR) is a popular method for solving imperfect-information extensive form games and is based on the notion of minimizing \emph{counterfactual regrets}. 
The counterfactual value for a given infostate is given by $v_i^{\pi}(s) = \sum_{(h, z) \in Z_{s}} \eta_{-i}^{\pi}(h) \eta^{\pi}(h, z) u_i(h, z)$. The \emph{counterfactual regret} for not taking action $a$ at infostate $s$ is then given by $r^{\pi}(s, a) = v_i^{\pi_{(s \to a)}}(s) - v_i^{\pi}(s)$, where $i = P(s)$  and $\pi_{(s \to a)}$ is identical to policy $\pi$ except action $a$ is always taken at infostate $s$. Notice that because $\eta^\pi_{i}(s) = \eta^\pi_{i}(h)$ (due to the assumption of perfect recall), the counterfactual values are simply the standard RL value functions normalized by the opponents' contribution to the probability of reaching $s$; that is, $v_{i}^\pi(s) = \frac{V_i^\pi(s)}{\eta^\pi_{-i}(s)}$\footnote{Intuitively, the counterfactual regret upper bounds how much an agent could possibly ``regret'' not playing action $a$ at infostate $s$ if she modified her policy to maximize the probability that state $s$ is encountered. As such, the normalization constant excludes player $i$'s probability contribution to reaching $s$ under $\pi$.}. This was first pointed out by \cite{srinivasan2018actor}. 
As with multi-agent BQL, at each iteration, we freeze all agents' policies before traversing the game tree and computing the counterfactual regrets for each $(s, a)$ pair. A new current policy $\pi^{n+1}$ is then computed using a regret minimizer, typically regret matching \cite{hart2000simple} or Hedge \cite{arora2012online} on the cumulative regrets $R^n(s, a) = \sum_n r^{\pi^n}(s, a)$, and the process continues.
 A learning procedure has \emph{vanishing regret} if $\lim_{N\rightarrow\infty} \frac{1}{N} R^N(s, a) = 0$ for all $s, a$. In two player, zero-sum games, \cite{zinkevich2008regret} prove that CFR has vanishing regret, and by extension, that the average policy converges to a Nash equilibrium of the game. In general-sum games, computing a Nash equilibrium is computationally hard \cite{daskalakis2009complexity}, but CFR guarantees convergence to a weaker solution concept known as a coarse-correlated equilibrium \cite{morrill2021hindsight}.

\subsection{External Sampling MCCFR with Hedge}
External sampling MCCFR (ES-MCCFR) is a popular variant of CFR which uses sampling to get unbiased estimates of the counterfactual regrets without exploring the full game tree~\cite{lanctot2009monte}. 
For each player $i$, ES-MCCFR samples a single action for every infostate which players other than $i$ act on, but explores all actions for player $i$.
Let $W$ denote the set of terminal histories reached during this sampling process. For any infostate $s$, let $h(s)$ denote the corresponding history that was reached during exploration (there can only be one due to perfect recall). For each visited infostate $s$, the sampled counterfactual regret, $\tilde{r}_i(s, a) = \tilde{v}^{\pi_{(s\to a)}}_i(s, a)  - \tilde{v}^{\pi}_i(s, a) = \sum_{z \in W} u_i(h, z)(\eta_i^\pi(h(s'), z) - \eta_i^\pi(h(s), z))$, is added to the cumulative regret, where $s'$ is the infostate reached after taking action $a$ at infostate $s$. After this process has been completed for each player, a new joint policy is computed using Hedge on the new cumulative regrets, and the procedure continues.
Under Hedge, the joint policy at episode $n+1$
is given by $\pi^{n+1}(s, a) = \frac{e^{R^n(s, a)/\tau_n}}{\sum_{a' \in A(s)} e^{R^n(s, a')/\tau_n}}$, where $\tau_n$ is a temperature hyperparameter. As Hedge is invariant to any additive constants in the exponent, we can replace the sampled counterfactual regrets with the sampled counterfactual values, i.e., $\tilde{v}^{\pi}_i(s, a) = \sum_{z \in W} u_i(z) \cdot \eta_i^\pi(h(s'), z)$, and similarly with the cumulative regrets.

\subsection{BQL vs. CFR}
\label{sec:compare}

The most important difference between CFR and BQL is in regard to  which infostates are updated at each iteration of learning. By default, BQL only performs updates along  a single trajectory of encountered infostates (``trajectory based").
 In a game of depth $D$, this means that there will be at most $O(D)$ updates per iteration. By contrast, CFR and most of its variants are typically\footnote{In theory, variants such as OS-MCCFR can learn with $O(D)$ updates per iteration, but the variance of such methods is often too high for practical usage. Other approaches, such as ESCHER~\cite{mcaleer2022escher}, have attempted to address this issue by avoiding the use of importance sampling.}
 not trajectory-based learning algorithms because many infostates that are updated are not on the path of play. In particular, CFR \cite{zinkevich2008regret}  requires performing counterfactual value updates at  every
 infostate, which can require up to $O(A^D)$ updates per iteration. 
While MCCFR reduces this by performing Monte-Carlo updates, variants such as ES-MCCFR still require making an exponential number of updates per iteration.

Despite the differences between BQL and CFR, the two algorithms are intimately linked. \citep[Lemma 1]{brown2019deep} first pointed out that the sampled counterfactual values $\tilde{v}_i^{\pi}(s, a)$ from ES-MCCFR have an alternative interpretation as Monte-Carlo based estimates of the Q-value $Q^\pi(s, a)$. 
Thus, ES-MCCFR with Hedge has a gradient update that ends up being very similar to BQL, with two exceptions. Firstly, BQL estimates its Q-values by bootstrapping whereas ES-MCCFR uses a Monte-Carlo update and thus updates based on the \emph{current} policy of all players.
Secondly, BQL chooses a policy based on a softmax over the \emph{average} Q-values, while CFR(Hedge) uses a softmax over \emph{cumulative} Q-values. As not every action is tested in BQL, the average Q-values may be based on a different number of samples for each action, making it unclear how to convert average Q-values into cumulative Q-values. %

 \section{Learning our \alg{}}
 \label{sec:algorithm}

 As described in Section~\ref{sec:compare}, while the updates that CFR(Hedge) and BQL perform at each infoset are quite similar,  BQL learns over trajectories, whereas CFR expands most (if not all) of the game tree at \emph{every infostate}. This exploration is necessary in nonstationary environments where the Q-values may change over time but  wasteful in stationary environments.
As a concrete  example, consider the following game. Half of the game is comprised of a standard game of poker, and half of the game is a ``dummy'' game identical to poker except that the payoff is always $0$.
In the ``poker'' subtree, BQL will fail to converge because this subtree is not an MDP. However, on the ``dummy'' subtree of the game, CFR will continue to perform full updates and explore the entire subtree, even though simply sampling trajectories on this ``dummy'' subtree would be sufficient for convergence.

{\em What if we could formulate an algorithm that could itself discover when it needed to expand a child node,
 and when it could simply sample a trajectory?} The hope is that this algorithm
 could closely match the performance of RL algorithms in stationary environments while  retaining performance guarantees in nonstationary ones. 
Moreover, one can easily imagine general environments containing elements of both – playing a game of Poker followed by a game of Atari, for example –
 where such an algorithm could yield better performance than either CFR or BQL.
 
\subsection{Child Stationarity}
\label{sec:childstationarity}

Motivated by the goal of adaptive exploration, we define the important concept of \emph{child stationarity}, which is a relaxation of the standard Markov stationarity requirement. Let $\sigma: \Delta \Pi$ denote a distribution over joint policies.
\begin{definition}[Child stationarity]
 An infostate $s$ and action $a$ satisfy {\em \cs{} with respect to distributions over joint policies $\sigma$ and $\sigma'$} if $\E{\pi \in \sigma}{T_{\pi}(h^\prime \mid s, a)} = \E{\pi \in \sigma'}{T_{\pi}(h^\prime \mid s, a)}$ and $\E{\pi \sim \sigma}{R_{\pi}(r \mid s, a, s^\prime)} = \E{\pi \sim \sigma'}{R_{\pi}(r \mid s, a, s^\prime)}$, where $s'$ denotes the infostate observed after playing $a$ at $s$.
\end{definition}

An infostate action pair $(s, a)$ satisfies child stationarity for policy distributions $\sigma$ and $\sigma'$ if the local reward and state transition functions, from the perspective of the current player,
remain stationary despite a change in policy distribution,
 and even if other portions of the environment, possibly
 including ancestors of $s$, are nonstationary.
 Importantly, the transition function must remain constant with respect to the  true, hidden states
 and not merely the observed infostates.\footnote{We give a more detailed discussion  of the similarities and differences between \cs{} and Markovian stationarity in Appendix~\ref{markov_vs_cs}.}
 By definition, an MDP satisfies child stationarity for all infostates $s$ and actions $a$ with respect to all possible distributions over joint policies $\sigma, \sigma^\prime$. In contrast, in imperfect-information games such as Poker, the distribution over the hidden state at your next turn, after playing a given action $a$ at infostate $s$, will generally depend on the policies of the other players, violating child stationarity.

The utility of  child stationarity can be seen as follows. While MDPs and perfect information games can be decomposed into subgames that are solved independently from each other \cite{tesauro1995temporal, silver2016mastering, sutton2018reinforcement}, the same is not true for their imperfect-information counterparts \cite{brown2020combining}. Finding transitions that satisfy \cs{} allows for natural decompositions of the environment that can  be solved independently from the rest of the environment. As such, \cs{} forms a core reason for why \alg{} is able to unify BQL and CFR into a single algorithm.

Define $G^\sigma(s, a)$ as a game whose initial hidden state is drawn from $\E{\pi \in \sigma}{T_{\pi}(h^\prime \mid s, a)}$ and then played identically to $G$, the original game, from that point onward. $G^\sigma(s, a)$ is analogous to the concept of Public Belief States introduced by ReBeL \citep[Section 4]{brown2020combining}.
\begin{theorem}
\label{thm:equivalentgame}
If $s, a$ satisfy child stationarity with respect to $\sigma, \sigma'$, then $G^\sigma(s, a) = G^{\sigma'}(s, a)$. Furthermore, if $\pi^*$ is a Nash equilibrium of $G^\sigma(s, a)$, then $\pi^*$ is also a Nash equilibrium of $G^{\sigma'}(s, a)$.
\end{theorem}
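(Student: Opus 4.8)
The plan is to show that $G^\sigma(s,a)$ and $G^{\sigma'}(s,a)$ are literally the same extensive-form game, after which the equilibrium claim is immediate. Recall that a finite extensive-form game is completely specified by its set of histories and the prefix order, the player function $P$, the information partition $S$, the chance player's move distributions, and the per-history rewards $r_i(\cdot)$. The construction $G^\sigma(s,a)$ modifies $G$ in exactly one place: it discards everything above (and including) the $(s,a)$ step and replaces the root by a single fresh chance move that draws an initial hidden state $h'$ from $\mathbb{E}_{\pi \in \sigma}[T_\pi(h' \mid s, a)]$ (and, under the natural bookkeeping, attaches to that move the immediate reward drawn from $\mathbb{E}_{\pi \in \sigma}[R_\pi(r \mid s, a, s')]$, where $s' = S(h')$). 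Everything strictly below $h'$ --- the subtree shape, who moves at each node, what each player observes, and all subsequent chance and reward distributions --- is inherited verbatim from $G$ and therefore does not depend on $\sigma$ at all.

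Concretely, I would write $G^\sigma(s,a)$ as a pair: an initial chance kernel $\mu_\sigma$ over pairs $(h', r)$, together with a fixed continuation object $\mathcal{C}$ obtained by gluing together the subtrees of $G$ rooted at each admissible $h'$ under the information partition and player function that $G$ already assigns to those subtrees. Only $\mu_\sigma$ carries any dependence on $\sigma$. By the two equalities in the definition of \cs{}, $\mu_\sigma = \mu_{\sigma'}$: the transition condition gives that the marginal distribution over the initial hidden state $h'$ is the same under $\sigma$ and $\sigma'$, and the reward condition gives that the immediate reward attached to the initial move (conditioned on reaching $s' = S(h')$) is the same. Since $\mathcal{C}$ is shared, $G^\sigma(s,a) = G^{\sigma'}(s,a)$ as extensive-form games. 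The equilibrium statement then requires no further work: Nash equilibrium is defined purely via the game's strategy sets and expected-utility functions, both of which are determined by the game itself, so any joint policy $\pi^*$ that is a Nash equilibrium of $G^\sigma(s,a)$ is, verbatim, a Nash equilibrium of $G^{\sigma'}(s,a)$.

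The part to handle with care --- more a matter of precision than a genuine obstacle --- is isolating exactly which components of $G^\sigma(s,a)$ are permitted to depend on $\sigma$, and observing that \cs{} is stated over \emph{hidden} histories $h'$ rather than merely observed infostates $s'$ precisely so that this decomposition goes through: if one only knew that the infostate-to-infostate transition $T_\pi(s' \mid s, a)$ was policy-independent, the continuation object $\mathcal{C}$ (in particular its information structure below the initial move) could differ between $\sigma$ and $\sigma'$, and the argument would fail. A secondary, routine check is that $G^\sigma(s,a)$ is a legal game in the same class: perfect recall is inherited from $G$, and the transition and reward kernels are well defined because all policies in the support of $\sigma, \sigma'$ are assumed fully mixed, as in the preliminaries.
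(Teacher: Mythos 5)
Your proposal is correct and takes essentially the same route as the paper's (much terser) proof: both arguments reduce to observing that the only $\sigma$-dependent component of $G^\sigma(s,a)$ is the initial distribution over hidden states (and immediate rewards), which child stationarity forces to coincide, so the two games are identical and share their equilibria. Your version is more careful in that it also invokes the reward condition and explains why the definition must be stated over hidden histories rather than infostates, points the paper's one-line proof leaves implicit.
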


\begin{algorithm}[t] 
    \caption{Returns True if $s, a$ is not child stationary.}
    \label{alg:detector}
    \begin{algorithmic}
    \Require{infostate $s$, action $a$, child hidden states $h_{1:N}$ (states following $(s,a)$), total episodes $N$, significance level $\alpha_s>0$.}
    \Statex
    \Function{DETECT}{$s, a, r_{1:N}, h_{1:N}, \alpha_s$}
        \Let{$X_1$}{\{$r_n, h_n \mid n \in 1 \cdots \floor{N/2}\}$}
        \Let{$X_2$}{\{$r_n, h_n \mid n \in \floor{N/2} + 1 \cdots N\}$}
        \Let{\text{pval}}{ChiSquared($X_1, X_2)$}
        \\
        \Return{$pval < \alpha_s$}
    \EndFunction
    \end{algorithmic}
    \end{algorithm}

\begin{proof}
By the definition of \cs{}, $\E{\pi \in \sigma}{T_{\pi}(h^\prime \mid s, a)} = \E{\pi \in \sigma'}{T_{\pi}(h^\prime \mid s, a)}$. Thus, $G^\sigma(s, a)$ and $G^\sigma(s, a)$ are informationally equivalent, as all players have the same belief distribution over hidden states at the beginning of the game and will thus have identical equilibria.
\end{proof}

Child stationarity forms the crux of the adaptive exploration policy of \alg{}, allowing us to employ trajectory-style updates at stationary $(s, a)$ pairs without affecting the convergence guarantees of CFR in two-player zero-sum games.

\subsection{Detecting Child Stationarity}

\label{sec:measuringchildstationarity}
How can we measure child stationarity in practice? 
Notice that the definition 
only requires that the transition and reward dynamics stay fixed with respect  to two particular distributions of joint policies, not all of them. 
Let $\pi_1, \pi_2, \cdots \pi_N$ be the policies used by our learning procedure for each iteration of learning from $1 \cdots N$. 
Define $\sigma^N_A$ as the uniform distribution over $\{\pi_1 \cdots \pi_{\floor*{N/2}}\}$ and $\sigma^N_B$ as the uniform distribution over $\{\pi_{\floor*{N/2} + 1} \cdots \pi_N\}$.
At iteration $N$, Algorithm~\ref{alg:detector} directly tests \cs{} with respect to distributions over joint policies $\sigma_A^N, \sigma_B^N$ as follows: 
for each infostate $s$ and action $a$, it keeps track of the rewards and true hidden states that follow playing action $a$ at $s$. 
 Then, it runs a Chi-Squared Goodness-Of-Fit test \cite{pearson1900criterion} to determine whether the distribution of true hidden states has changed between the first and second half of training. We adopt \cs{} as the null hypothesis. 
 While Algorithm~\ref{alg:detector} is not a perfect detector, Theorem~\ref{thm:nofalsenegatives} shows that in the limit,  Algorithm~\ref{alg:detector} will asymptotically never claim that a given $s, a, \sigma_A, \sigma_B$ satisfies \cs{} when it does not, assuming $(s, a)$ is queried infinitely often. However, the Chi Squared test retains a fixed rate of false positives given by the significance level $\alpha_s$, so Algorithm~\ref{alg:mainalg} will incorrectly reject the null hypothesis of \cs{} with some positive probability.
 \begin{theorem}
 \label{thm:nofalsenegatives}
 As $|X_1|, |X_2| \rightarrow \infty$, the probability that Algorithm~\ref{alg:detector} falsely claims $s, a$ satisfies \cs{} when it does not goes to $0$. 
 \end{theorem}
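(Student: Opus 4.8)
The plan is to recognize Algorithm~\ref{alg:detector} as a Pearson chi‑squared two‑sample (homogeneity / goodness‑of‑fit) test and to show it is \emph{consistent} against the fixed alternative that $(s,a)$ is not child stationary: under that alternative the test statistic diverges in probability, so its $p$‑value tends to $0$ and the test rejects with probability tending to one. Since the false‑negative event in the theorem is exactly $\{\,p\text{val} \ge \alpha_s\,\}$, i.e.\ the statistic falling below the \emph{fixed} critical value $\chi^2_{1-\alpha_s,\,k-1}$, it suffices to prove that the statistic grows without bound.

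First I would fix the category structure. As $\mathcal H$ is finite and (binning if needed) the reward takes finitely many values, the observed pairs $(r,h')$ following play of $a$ at $s$ lie in a fixed set of $k$ categories, so the test has a fixed number $k-1$ of degrees of freedom and hence a fixed finite critical value. Let $P_1$ and $P_2$ denote, respectively, the mixture over the policies used in the first and second halves of training of the conditional law of $(r,h')$ given that $a$ is played at $s$; these are precisely $\E{\pi\in\sigma_A^N}{\,\cdot\,}$ and $\E{\pi\in\sigma_B^N}{\,\cdot\,}$, and $(s,a)$ violating \cs{} with respect to $\sigma_A^N,\sigma_B^N$ means exactly $P_1\neq P_2$, so some category $c^\star$ has $P_1(c^\star)\neq P_2(c^\star)$ (a mismatch in the hidden‑state transition, or in the conditional reward law over positively‑reached outcomes, each forces such a $c^\star$). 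Conditioning on the realized policy sequence, the entries of $X_1$ are independent with the $n$‑th drawn from the $(r,h')$‑law under $\pi_n$, so a law of large numbers for bounded independent (non‑identically‑distributed) summands gives that the empirical frequencies $\hat p_{1c}$ converge, as $|X_1|\to\infty$, to $P_1(c)$, and likewise $\hat p_{2c}\to P_2(c)$; in particular the total counts $n_1=|X_1|,n_2=|X_2|$ both tend to infinity.

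Next I would bound the statistic. With $\bar p_c$ the pooled proportion, the Pearson statistic is $T=\sum_{j}\sum_c n_j(\hat p_{jc}-\bar p_c)^2/\bar p_c$; keeping only the term for $c=c^\star$ and for $j=\argmin_j n_j$, and using $\bar p_{c^\star}\le 1$ together with the fact that $\bar p_{c^\star}$ is a convex combination of $\hat p_{1c^\star},\hat p_{2c^\star}$ weighted towards the larger sample, one gets $T \ge \tfrac14\,\min(n_1,n_2)\,(\hat p_{1c^\star}-\hat p_{2c^\star})^2$. Since $\min(n_1,n_2)\to\infty$ and $(\hat p_{1c^\star}-\hat p_{2c^\star})^2\to(P_1(c^\star)-P_2(c^\star))^2>0$, we conclude $T\xrightarrow{p}\infty$, hence $p\text{val}=\Pr[\chi^2_{k-1}>T]\xrightarrow{p}0$, and therefore $\Pr[\,p\text{val}\ge\alpha_s\,]\to 0$ for the fixed level $\alpha_s>0$, which is the claim.

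The classical chi‑squared asymptotics are not the hard part; the main obstacle is the bookkeeping behind the law‑of‑large‑numbers step, because the samples are \emph{not} i.i.d.\ — they are generated under a changing sequence of learned policies, $X_j$ records only those iterations on which $(s,a)$ was actually played, and the very distributions $\sigma_A^N,\sigma_B^N$ against which \cs{} is tested are themselves defined in terms of that policy sequence. One therefore has to phrase the convergence $\hat p_{jc}\to P_j(c)$ as a statement about a triangular array of conditionally independent indicators and invoke the ``$(s,a)$ queried infinitely often'' hypothesis to secure $n_1,n_2\to\infty$. Once that is in place, the divergence of $T$ and hence the vanishing false‑negative rate follow as above.
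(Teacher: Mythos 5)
Your proof is correct and rests on the same underlying fact as the paper's: the consistency of the Pearson chi-squared test against a fixed alternative. The difference is one of depth. The paper's entire proof is a one-sentence citation asserting that the chi-squared goodness-of-fit test has vanishing Type~II error in the limit; you instead derive that consistency from scratch, by lower-bounding the test statistic by $\tfrac14\min(n_1,n_2)\,(\hat p_{1c^\star}-\hat p_{2c^\star})^2$ at a category $c^\star$ where the two mixture laws disagree, showing the statistic diverges while the critical value stays fixed. More importantly, you correctly identify the step that the paper's bare citation does not actually cover: the textbook consistency result is for i.i.d.\ samples, whereas here the entries of $X_1$ and $X_2$ are drawn under a changing sequence of learned policies, only on iterations where $(s,a)$ is visited, and the target mixtures $\sigma_A^N,\sigma_B^N$ are themselves defined from that policy sequence. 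Your plan to treat the empirical frequencies as averages over a triangular array of conditionally independent indicators, and to invoke the ``queried infinitely often'' hypothesis to get $n_1,n_2\to\infty$, is the right way to close that gap; the one caveat (which you implicitly acknowledge) is that the argument needs the discrepancy $|P_1^N(c^\star)-P_2^N(c^\star)|$ to stay bounded away from zero rather than shrink as a local alternative, which is the natural reading of ``does not satisfy \cs{}'' and is also what the paper tacitly assumes. In short, your write-up is a strictly more careful version of the paper's argument rather than a different one.
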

 \begin{proof}
 This statement is directly implied by the fact that the Chi-Squared Goodness-of-Fit test guarantees that the Type II error vanishes in the limit \cite{pearson1900criterion}.
 \end{proof}

\subsection{The \alg{} Algorithm}

We are now ready to learn our \alg{}. At a high level, \alg{} chooses between a BQL update and a CFR update for each $Q(s, a)$ value based on whether or not it satisfies child stationarity with respect to the joint policies encountered over the course of training.  %
One can view \alg{} from two equivalent perspectives. From one perspective, \alg{} runs a variant of ES-MCCFR, except that at stationary infostates, where it is unnecessary to run a full CFR update, it defaults to a cheaper BQL update. Alternatively, one can view \alg{} as defaulting to a variant of BQL until it realizes that an infostate is too nonstationary for BQL to converge. At that point, it backs off to more expensive CFR style updates. We provide the formal description of \alg{} in Algorithm~\ref{alg:mainalg}. In the multi-agent setting, \alg{} freezes all players' policies at the beginning of each learning iteration and then runs independently for each player, exactly as multi-agent BQL and ES-MCCFR does. 

\algdef{SE}[SUBALG]{Indent}{EndIndent}{}{\algorithmicend\ }%
\algtext*{Indent}
\algtext*{EndIndent}
      \begin{algorithm}[t]
       \caption{\alg{}}
       \label{alg:mainalg}
    \begin{algorithmic}
        \Require{$Q$, CNT initialized to $0$, discount factor $\gamma$, total episodes $N$. Initial observation and hidden states $s_0, h_0$. $H = \varnothing$. Significance value $p$, exploration $\epsilon$ }
        \Statex
        \For{$n \gets 1 \textrm{ to } N$}
                \State \alg{}$\br{h_0, s_0, Q, \gamma, n, a_0, H, p, \epsilon}$ \Comment{$a_0$ is null}
        \EndFor
        \Statex
        \Function{GetChild}{$h, s, a, \pi, Q, \gamma$}
            \State Sample $h^\prime, s^\prime \sim T^{\pi}\br{h^\prime \mid h, a}, S(h^\prime)$
            \State Sample $r \sim R^{\pi}\br{r \mid s, a, s^\prime}$
            \Let{$a^*$}{$\argmax_{a} Q\br{s^\prime, a}$}
            \Let{$\nabla_{s, a}$}{$\br{r + \gamma Q\br{s', a^*}} - Q(s, a)$} 
            \State \Return{$h', s', a^*, \nabla_{s, a}$}
        \EndFunction
        \Statex
        \Function{\alg{}}{$h, s, Q, \gamma, n, a_{opt}, H, p, \epsilon$}
            \If{h is terminal}
                \Return{0}
            \EndIf
                \Let{$\pi^n$}{$\texttt{softmax}\br{Q\br{s, *}, \tau=1/(\tau_n \cdot \text{CNT}(s))}$}
            \Let{$\text{CNT}(s)$}{$\text{CNT}(s) + 1$}
            \State Sample $a_{traj}$ from $\pi^n(s)$ with $\epsilon$-exploration

            \For{$a \in A(s)$}
                \Let{$h^\prime, s^\prime,  a^*, \nabla_{s, a}$}{GetChild$\br{h, s, a, \pi^n, Q, \gamma}$}
                \State{Append $r, h^\prime$ to $H(s, a)$}
                \If{DETECT$\br{s, a, H(s, a), p}$ OR $a = a_{traj}$}
                    \Let{$\nabla_{s^\prime, a^*}$}{$\text{\alg{}}\br{h^\prime, s^\prime, Q, \gamma, t, a^*, H, p, \epsilon}$}
                    \If{s is not stationary}
                        \Let{$\nabla_{s, a}$}{$\nabla_{s, a} + \text{CNT}(s^\prime) \cdot \nabla_{s^\prime, a^*}$}
                    \EndIf
                \EndIf
                \Let{$Q(s, a)$}{$Q(s, a) + \frac{1}{ \text{CNT}(s)}\nabla_{s, a}$}
            \EndFor
            \Let{$a_{opt}$}{$\argmax Q(s, *)$}
            \State \Return{$\nabla_{s, a_{opt}}$}
        \EndFunction
    \end{algorithmic}
    \end{algorithm}

\paragraph{Where \alg{} Runs Updates} As described in Section~\ref{sec:compare}, one major difference between BQL and ES-MCCFR is the number of infostates each algorithm updates per learning iteration. While BQL updates only infostates encountered along a single trajectory, \alg{} adaptively adjusts the proportion of the game tree it expands by measuring child stationarity at each $(s, a)$ pair. Concretely, at every infostate encountered (regardless of its stationarity), \alg{} chooses an action $a_{traj}$ from its policy (with $\epsilon$-exploration) and recursively runs itself on the resulting successor state $s^\prime$.
For all other actions $a' \neq a$, it tests whether the tuple $(s, a')$ satisfies child stationarity. Unless the null is rejected, it prunes all further updates on that subtree for this iteration. Otherwise, when $(s, a')$ is detected as nonstationary,
 it also recursively runs on the successor state sampled from $T(s' \mid s, a')$. As such, if all actions $a$ fail the child stationarity test, then \alg{} will expand all child nodes, just like ES-MCCFR, but if the environment is an MDP and all $s, a$ satisfy child stationarity (according to the detector), then \alg{} will only expand one child at each state and thus approximate BQL style trajectory updates. With sufficient $\epsilon$-exploration (and because at least one action $a_{traj}$ will be expanded for each infostate $s$), \alg{} satisfies the requirement of Theorem~\ref{thm:nofalsenegatives} that every infostate $s$ will be visited infinitely often in the limit.

\paragraph{Q-value updates} In terms of the actual Q-value update that \alg{} performs, recall that in Section~\ref{sec:compare}, we describe two differences between BQL and ES-MCCFR. Firstly, ES-MCCFR updates its Q-values based on the \emph{current} policy while BQL performs updates based on the \emph{average} policy. To determine which update to make, \alg{} measures the level of \cs{} for a given $(s, a)$ pair, as it does when choosing \emph{which} infostates to update. If $(s, a)$ satisfies child stationarity, then \alg{} does a regular BQL update. If  not, \alg{} adds a nonstationarity correction factor $\text{CNT}(s^\prime) \cdot \nabla_{s^\prime, a^*}$ to the Q-value update,  correcting for the difference between the instantaneous Q-values sampled via Monte-Carlo sampling and the average bootstrapped Q-values. This correction exactly recovers the update function of MAX-CFR, a variant of ES-MCCFR described in more detail in Appendix~\ref{sec:maxcfr}. %

Secondly, ES-MCCFR chooses its policy as a function of \emph{cumulative} Q-values, whereas BQL uses \emph{average} Q-values. To address this, we force \alg{} to test each action upon reaching an infostate, and thus update $Q(s, a)$ for all $a$. This allows us to set the temperature to recover a scaled multiple of the cumulative Q-values. Specifically, by setting a temperature of $\frac{1}{\tau_n \cdot CNT(s)}$, where $CNT(s)$ counts the number of visits to $s$, we recover a softmax over the cumulative Q-values scaled by $\tau_n$, just like with CFR(Hedge). BQL is well known to converge to the optimal policy in an MDP as long as we are greedy in the limit and visit every state, action pair infinitely often, which is guaranteed with an appropriate choice of $\tau_n$ and $\epsilon$-schedule \cite{singh2000convergence, cesa-bianchi2017boltzmann}.

\paragraph{Convergence Guarantees} We formally prove the following theorems describing the performance of \alg{} in MDPs and two-player zero-sum games. Theorem~\ref{thm:constantfactor} proves that \alg{} converges no slower than an $O(A)$ factor compared to BQL in an MDP  using only the detector described in Section~\ref{sec:measuringchildstationarity}. Additionally, Theorem~\ref{thm:convergencetone} proves that a minor variant of \alg{} finds a Nash equilibrium in a two-player zero-sum game (assuming access to a perfect oracle). The full proofs are included in the supplementary material.

\begin{theorem}
\label{thm:constantfactor}
Given an appropriate choice of significance levels $p$ (to control the rate of false positives), \alg{} (as given in Algorithm~\ref{alg:mainalg}) asymptotically converges to the optimal policy with only a worst-case $O(A)$ factor slowdown compared to BQL in an MDP, where $A$ is the maximum number of actions available at any infoset in the game.
\end{theorem}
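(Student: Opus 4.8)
The plan is to prove the two halves of the statement separately: (i) that \alg{} still drives $Q \to Q^*$ (hence the policy to the optimal one) on an MDP, and (ii) that its per-episode computational cost is at most an $O(A)$ multiple of BQL's. The single fact that ties everything together is the one noted in Section~\ref{sec:childstationarity}: on an MDP, \emph{every} $(s,a)$ genuinely satisfies child stationarity with respect to every pair $\sigma, \sigma'$. Consequently the only way the call \textsc{DETECT}$(s, a', H(s,a'), p)$ can ever return true on a non-trajectory action $a'$ is a Type~I error, which by construction of the Chi-Squared test occurs with probability at most the significance level. (Theorem~\ref{thm:nofalsenegatives}, about false negatives, is vacuous here since there is no genuine nonstationarity.)

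For the cost bound (ii), I would first observe that at each infostate $s$ it reaches, \alg{} calls \textsc{GetChild} and updates $Q(s,a)$ once for \emph{every} $a \in A(s)$, whereas BQL touches only the single action on its trajectory; this factor of $|A(s)| \le A$ is the source of the claimed slowdown. It then remains to bound how many infostates \alg{} reaches. It always recurses on the one trajectory action $a_{traj}$, producing a trajectory of the same expected length $D$ as BQL's, and recurses into the subtree below any other action $a'$ only when \textsc{DETECT} falsely fires, with probability $\le p$. A branching-process / union-bound estimate then bounds the expected number of explored nodes by $O\!\left((D{+}1)(1 + (A{-}1)p)^{D}\right)$; choosing the significance level small relative to $A$ and $D$ (e.g.\ $p = p_n \le \tfrac{1}{(A-1)D}$ for all $n$, which is the ``appropriate choice'' in the statement) makes $(1+(A-1)p)^D$ a constant, so the explored tree has $O(D)$ nodes and the per-episode work is $O(AD)$, i.e.\ $O(A)$ times BQL's $O(D)$. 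This is stated as a worst-case (uniform-over-episodes) bound since $p_n \le p_1$ throughout.

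For convergence (i), the cleanest route is to let the significance level decay with the episode count, $p = p_n$ with $\sum_n p_n < \infty$. By Borel--Cantelli, almost surely only finitely many false positives ever occur; after some (random, finite) episode $N_0$, \textsc{DETECT} always returns false on non-trajectory actions, the nonstationarity-correction term $\text{CNT}(s')\cdot \nabla_{s',a^*}$ is never added again, and \alg{} reduces exactly to BQL run with the temperature schedule $\tau = 1/(\tau_n \cdot \text{CNT}(s))$ and the given $\epsilon$-exploration, the only difference being that it sweeps all $A$ actions at each visited infostate (each with an independently sampled $(s',r)$), which only adds valid updates. Since rewards are bounded and the horizon finite, the $Q$-table stays bounded through the finite prefix of at most $N_0$ corrected updates; the step sizes $1/\text{CNT}(s)$ satisfy the Robbins--Monro conditions; $\epsilon$-exploration keeps every $(s,a)$ visited infinitely often; and the schedule is greedy-in-the-limit, so the standard Boltzmann Q-learning convergence results \cite{singh2000convergence, cesa-bianchi2017boltzmann} give $Q \to Q^*$. (Alternatively one could keep $p$ a small constant and instead argue that the corrected update is itself a valid asynchronous stochastic-approximation step for the Bellman optimality operator --- it is exactly the MAX-CFR update of Appendix~\ref{sec:maxcfr}, whose fixed point on an MDP is still $Q^*$ --- so that even infinitely many corrected updates leave the limit unchanged; I would use the decaying-$p_n$ argument as primary since it is shortest.)

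The main obstacle I expect is reconciling the two demands on the significance level: part (ii) only needs $p$ a sufficiently small constant, while the simplest version of part (i) wants $p_n \to 0$. These are compatible --- take $p_n$ decreasing to $0$ with $p_1$ already below the $O(1/(AD))$ threshold --- but one must then be careful to phrase the $O(A)$ bound as a worst-case bound uniform over episodes, which is legitimate precisely because $p_n \le p_1$. The only other delicate point is confirming that the ``test-every-action'' sweep does not interfere with Q-learning convergence; this is routine, since performing a full action sweep at each visited state with independent transition samples is still a $\gamma$-contraction-based stochastic update meeting the usual infinitely-often and step-size conditions.
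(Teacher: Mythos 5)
Your proposal is correct and reaches the theorem by a genuinely different route from the paper in both halves. For the cost bound, the paper uses depth-dependent significance levels ($\alpha \le A^{-d}$ at depth-$d$ infostates on the sampled trajectory, $\alpha \le 1/A$ elsewhere) and argues that each off-trajectory infostate at depth $d$ is visited with probability $A^{-d}$, so the at most $A^d$ nodes per level contribute $O(1)$ expected extra updates per level; your uniform choice $p \le \tfrac{1}{(A-1)D}$ with the branching-process estimate $(D+1)\bigl(1+(A-1)p\bigr)^D$ reaches the same $O(D)$ expected node count more simply. Your accounting is in fact more careful on one point the paper glosses over: once a single false positive sends \alg{} into an off-trajectory subtree, the algorithm expands that subtree's own $a_{traj}$ spine with no further false positives required, so the visit probability of a deep node is not a product of one $1/A$ factor per off-trajectory ancestor; your recursion $\mathbb{E}[V_{d+1}] \le (1+(A-1)p)\,\mathbb{E}[V_d]$ handles these free sub-spines correctly, at the price of needing $p = O(1/(AD))$ everywhere rather than $O(1/A)$ off the trajectory. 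For convergence, the paper keeps the significance level fixed and argues that the spurious corrected updates do no harm because the MAX-CFR update also converges to the optimal policy in an MDP (your parenthetical alternative); your primary route --- summable $p_n$, Borel--Cantelli over the at most $O(AD)$ tests per episode, and exact reduction to an all-action-sweep BQL after a finite random time --- is cleaner and avoids reasoning about interleaved corrected and uncorrected updates to a shared $Q$-table, though it commits you to a vanishing significance schedule, which you correctly reconcile with the uniform-over-episodes cost bound by requiring $p_1$ already below the threshold. Both arguments then rest on the same standard GLIE and Robbins--Monro conditions for the final Q-learning convergence step.
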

 
To guarantee convergence to a Nash equilibrium in two-player zero-sum games, we require two modifications to Algorithm \ref{alg:mainalg}. Firstly, we assume access to a perfect oracle for detecting child stationarity. Secondly, to ensure that stationary and nonstationary updates do not interfere with each other, we keep track of separate Q-values, choosing which to update and use for exploration according to whether the current $(s, a)$ pair satisfies child stationarity.

\begin{theorem}
\label{thm:convergencetone}
Assume that Algorithm~\ref{alg:mainalg} tracks separate $Q^{stat}(s, a)$ and $Q^{nonstat}(s, a)$ values for stationary and nonstationary updates and has access to a perfect oracle for detecting \cs{}. Then, the average policy $\lim_{N\rightarrow \infty} \frac{1}{N} \sum_{n=1}^N \pi_n$ converges to a Nash equilibrium in a two-player zero-sum game with high probability.
\end{theorem}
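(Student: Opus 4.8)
The plan is to reduce the statement to the standard theorem \citep{zinkevich2008regret} that, in a two-player zero-sum game, the average policy converges to a Nash equilibrium whenever the cumulative counterfactual regret $R^N(s,\cdot)$ at every infostate grows sublinearly in $N$ (via the regret decomposition, sublinear counterfactual regret everywhere implies sublinear overall regret for both players, hence an $\epsilon$-Nash with $\epsilon\to 0$). Because the variant of Algorithm~\ref{alg:mainalg} partitions every $(s,a)$ pair permanently into ``stationary'' and ``nonstationary'' according to the perfect oracle and stores $Q^{stat}$ and $Q^{nonstat}$ separately, the two update rules never overwrite one another, so it suffices to establish vanishing counterfactual regret separately on the two classes. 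Since both the Monte-Carlo sampling of trajectories and the $\epsilon$-exploration are random, every bound is ``with high probability,'' to be collected at the end by a single union bound over the finitely many infostates of the underlying Azuma--Hoeffding concentration events.

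First I would dispatch the nonstationary infostates. On these, Algorithm~\ref{alg:mainalg} performs exactly the \cfr{} update: the nonstationarity correction $\mathrm{CNT}(s')\cdot\nabla_{s',a^*}$ is constructed precisely so that the bootstrapped target becomes the cumulative counterfactual-value target of \cfr{} (this identity is the content of Appendix~\ref{sec:maxcfr}, which I would cite rather than reprove). Given that, the sampled counterfactual values are unbiased estimates of $Q^\pi(s,a)$ \citep[Lemma 1]{brown2019deep}, and applying the Hedge regret bound to the cumulative estimates together with a martingale-concentration argument controlling the sampling noise yields $R^N(s,\cdot)=O(\sqrt{N\log|\mathcal{A}(s)|})$ with high probability --- the familiar ES-MCCFR guarantee \citep{lanctot2009monte}. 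A bookkeeping point is that the temperature schedule $\tau_n$ must simultaneously be a valid Hedge schedule on the nonstationary infostates and a valid ``greedy in the limit'' cooling schedule for the BQL updates below; a choice such as $\tau_n=\Theta(1/\sqrt n)$ works, since then the effective inverse temperature $\tau_n\cdot\mathrm{CNT}(s)=\Theta(\sqrt n)\to\infty$ while $\tau_n\to 0$.

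The crux is the stationary infostates, where Algorithm~\ref{alg:mainalg} does a pure BQL update on $Q^{stat}$. I would prove, by induction on height in the game tree (leaves upward), the claim that at every stationary infostate $s$: (i) the time-averaged counterfactual value vector $\frac1N\sum_{n}Q^{\pi^n}(s,\cdot)$ converges to a fixed vector $Q^*(s,\cdot)$; (ii) $Q^{stat}(s,\cdot)\to Q^*(s,\cdot)$; and (iii) $\frac1N R^N(s,\cdot)\to 0$. The engine is Theorem~\ref{thm:equivalentgame}: child stationarity of $(s,a)$ fixes the immediate reward and the successor distribution out of $(s,a)$, and the subgame rooted at the successor $s'_a$ is the invariant game $G^\sigma(s,a)=G^{\sigma'}(s,a)$, identical across every policy-distribution the learner ever visits. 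By the inductive hypothesis every infostate strictly below $s$ has vanishing counterfactual regret (stationary ones by induction, nonstationary ones from the Hedge analysis), so the average policy in that subgame converges to its Nash equilibrium; because the subgame is two-player zero-sum its value is unique, and a standard zero-sum argument then gives $\frac1N\sum_n V^{\pi^n}(s'_a)\to V^*(s'_a)$ even though the current-iterate values do not converge. Hence $\frac1N\sum_n Q^{\pi^n}(s,\cdot)$ converges to the fixed vector with $a$-th entry $\bar r(s,a)+\gamma V^*(s'_a)$; GLIE/stochastic-approximation arguments \citep{singh2000convergence,cesa-bianchi2017boltzmann} give $Q^{stat}(s,\cdot)\to Q^*(s,\cdot)$ and $\pi^n(\cdot\mid s)$ converging to the point mass on $\argmax_a Q^*(s,a)$; and a short Ces\`aro estimate shows both $\max_a\frac1N\sum_n Q^{\pi^n}(s,a)$ and $\frac1N\sum_n Q^{\pi^n}(s,\pi^n)$ converge to $\max_a Q^*(s,a)$, so $\frac1N R^N(s,\cdot)\to 0$.

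I expect the main obstacle to be exactly this interface between stationary and nonstationary regions inside the induction: a stationary infostate $s$ may sit above nonstationary infostates where the current-iterate policy $\pi^n$ never converges (only its average does), so $Q^{\pi^n}(s,\cdot)$ genuinely oscillates and the textbook BQL convergence proof does not apply verbatim. The fix is to use child stationarity to collapse $Q^{stat}(s,a)$'s dependence on the whole subtree into the single scalar $\frac1N\sum_n V^{\pi^n}(s'_a)$, use zero-sum value uniqueness to force that scalar to converge, and order the induction by whole stationary/nonstationary blocks so that BQL's bootstrap targets within a stationary block are themselves the converging $Q^{stat}$ values (with the value handed up across a block boundary from a CFR-controlled region being the converging \emph{average} of the iterates there); the separate $Q^{stat}/Q^{nonstat}$ bookkeeping is what makes this consistent. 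Secondary technical items to finish are: verifying the \cfr{} identity so the Hedge bound literally applies to Algorithm~\ref{alg:mainalg}'s nonstationary updates; making the union-bounded concentration quantitative so ``with high probability'' is precise; confirming $\tau_n$ and the $\epsilon$-schedule jointly meet both the Hedge and GLIE requirements; and checking that the ES-MCCFR-style sampling does not bias the child-stationary transition statistics relied on at the boundary.
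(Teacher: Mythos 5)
Your overall architecture --- induction from the leaves upward, reduction to vanishing local counterfactual regret via \citep[Theorem 2]{zinkevich2008regret}, a Hedge/\cfr{} bound at nonstationary infostates, and child stationarity plus zero-sum value uniqueness (the minimax theorem) to pin down a converging value at stationary infostates --- matches the paper's proof closely. The paper's Case~1 is your nonstationary branch (citing Lemma~\ref{maxcfrboundhigh} for the $O(1/\sqrt{N})$ local regret of \cfr{}), and its Case~2 is your stationary branch, where it likewise argues that the limit subgame $G^{*}(s,a)$ exists and that the estimated Q-values converge to $r^{*}+\gamma Q^{*}$ because all Nash equilibria of a two-player zero-sum subgame share the same value.

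There is, however, a genuine gap: you assume the oracle ``partitions every $(s,a)$ pair permanently into stationary and nonstationary.'' The oracle does no such thing. It tests \cs{} at iteration $N$ with respect to the evolving empirical distributions $\sigma^N_A$ and $\sigma^N_B$ over the policies actually played, so its verdict $d_n \in \{0,1\}$ at a fixed $(s,a)$ can change from iteration to iteration and need not converge at all. The paper devotes an entire third case to this: when $(d_n)_n$ oscillates, it splits the iteration indices into $A^N_{stat}$ and $A^N_{nonstat}$ (both of which must diverge in size), uses the separate $Q^{stat}/Q^{nonstat}$ tables to argue the two update streams do not interfere, reduces each sub-sequence to Cases~1 and~2 respectively, and then recombines via $\max_a r^{\overline{\pi}^N}(s,a) \leq \frac{1}{N}\sum_k \norm{A^N_k}\max_a r^{\pi^k_N}(s,a)$, carrying a growing collection of partitions up the induction. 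Your proof plan has no analogue of this step, and without it the argument simply does not cover all executions of the algorithm; indeed the separate Q-tables --- which you treat as mere bookkeeping --- exist precisely to make this oscillating case work. A secondary overclaim in the same spirit: at a ``stationary'' $(s,a)$ the subgame is not literally ``identical across every policy-distribution the learner ever visits''; the oracle only certifies equality of the two half-split distributions at the iterations where it fires, so one still needs the paper's Cauchy-sequence argument showing $G^{\overline{\pi}^N}(s,a)$ converges to a well-defined limit game $G^{*}(s,a)$ before the minimax-value argument can be applied.
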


In our practical implementation of \alg{}, we make several simplifications compared to the assumptions required in theory. Since assuming access to a perfect oracle is unrealistic in practice, we use the detector described in Algorithm~\ref{alg:detector}, which guarantees asymptotically vanishing Type II error, but nevertheless retains some probability of a false positive in the limit. As described in Algorithm~\ref{alg:mainalg}, we only track a single Q-value for both stationary and nonstationary updates. Finally, we use a constant significance value for all infostates. Our practical implementation of \alg, despite not fully satisfying the  assumptions of Theorems~\ref{thm:constantfactor} and~\ref{thm:convergencetone}, nonetheless achieves comparable performance with both BQL and CFR in our experiments.

We also allow for different temperature schedules for stationary and nonstationary infostates, and we only perform the stationarity check with probability $0.05$ at each iteration, significantly reducing the time spent on these checks. We find that these modifications simplify our implementation without greatly affecting performance.
\section{Experimental Results}

We evaluate \alg{} across several single and multi-agent settings drawn from both the OpenSpiel game library \cite{lanctot2019openspiel} and OpenAI Gym \cite{brockman2016openai}, benchmarking it against MAX-CFR, BQL \cite{sutton2018reinforcement}, OS-MCCFR \cite{lanctot2009monte}, and ES-MCCFR \cite{lanctot2009monte} (when computationally feasible). All experiments with \alg{} were run on a single 2020 Macbook Air M1 with 8GB of RAM. We use a single set of hyperparameters and run across three random seeds, with 95\% error bars included. Hyperparameters and detailed descriptions of the environments are given in Appendix~\ref{sec:hyperparameters}.
Auxiliary experiments with TicTacToe are also included in Appendix~\ref{sec:add_experiments}. All plots describe regret/exploitability, so a lower line indicates superior performance.

\begin{figure*}[t]
    \centering
    \begin{subfigure}[t]{0.29\textwidth}
        \centering
        \includegraphics[width=\linewidth]{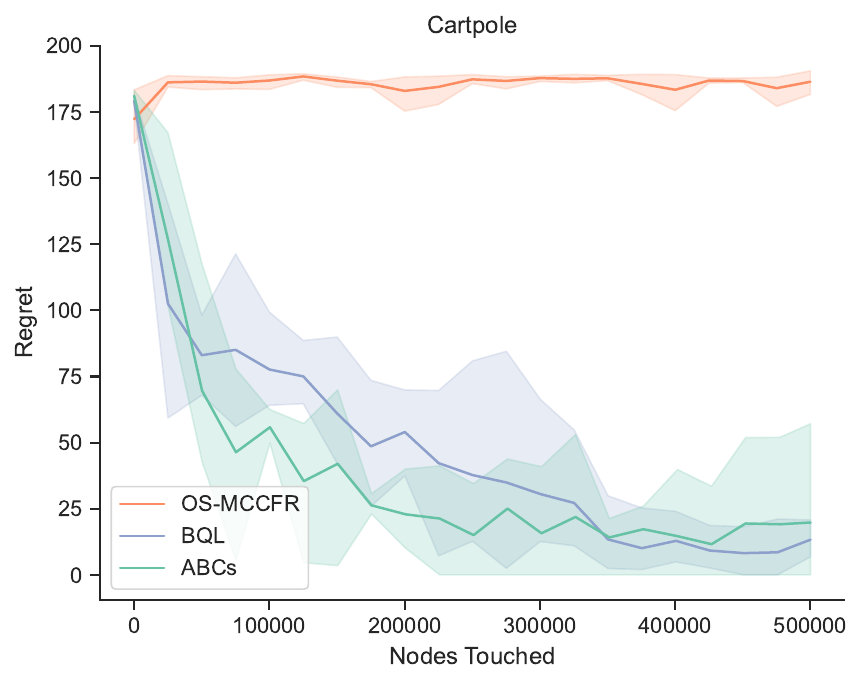} 
        \caption{Cartpole} \label{fig:cartpole}
    \end{subfigure}
    \setcounter{subfigure}{2}%
    \begin{subfigure}[t]{0.29\textwidth}
        \centering
        \includegraphics[width=\linewidth]{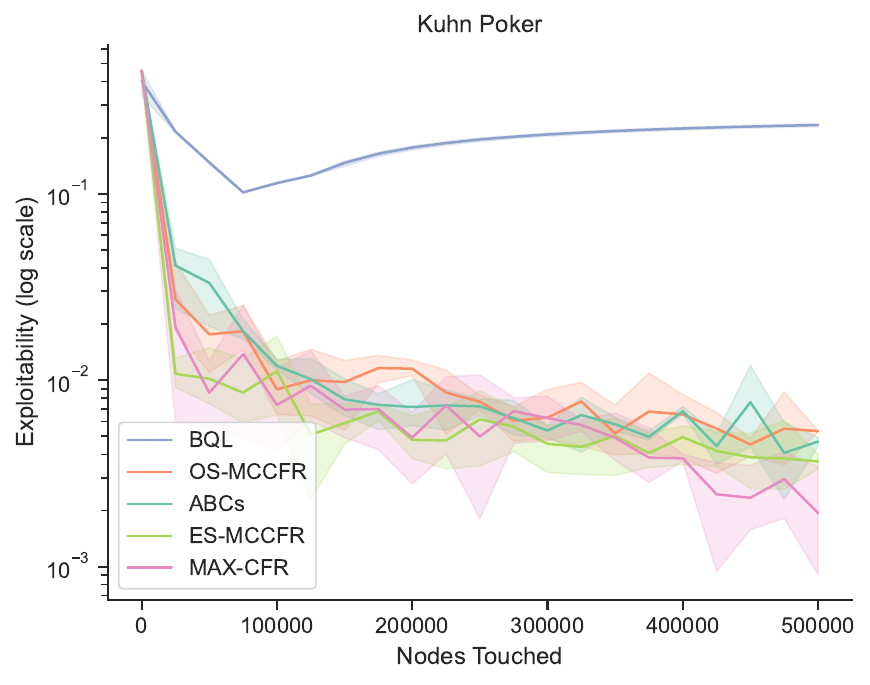} 
        \caption{Kuhn Poker} \label{fig:kuhn}
    \end{subfigure}
    \setcounter{subfigure}{4}%
    \begin{subfigure}[t]{0.29\textwidth}
        \centering
        \includegraphics[width=\linewidth]{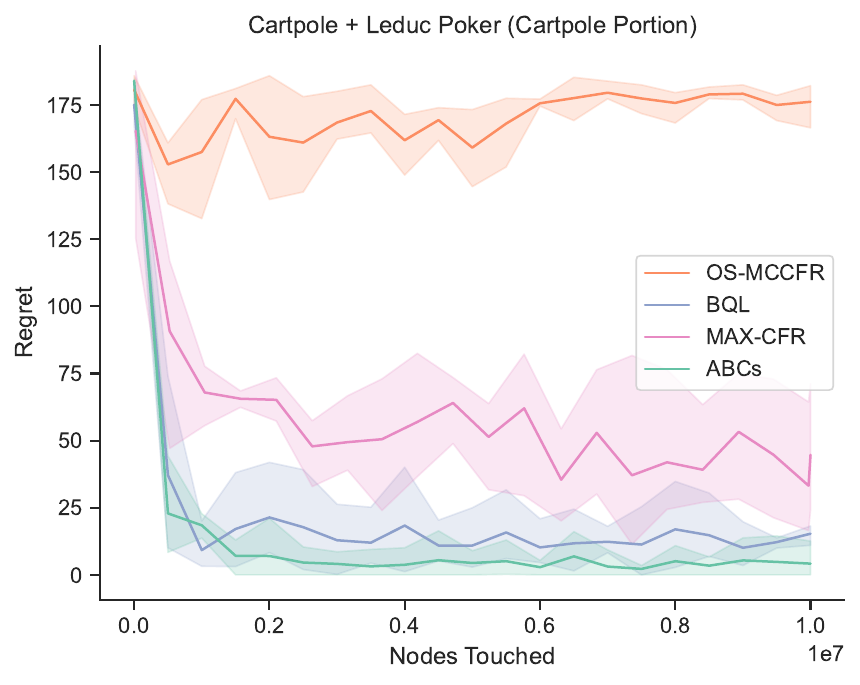} 
        \caption{Cartpole (Stacked Environment)} \label{fig:stacked_cartpole}
    \end{subfigure}

    \setcounter{subfigure}{1}%
    \begin{subfigure}[t]{0.29\textwidth}
        \centering
        \includegraphics[width=\linewidth]{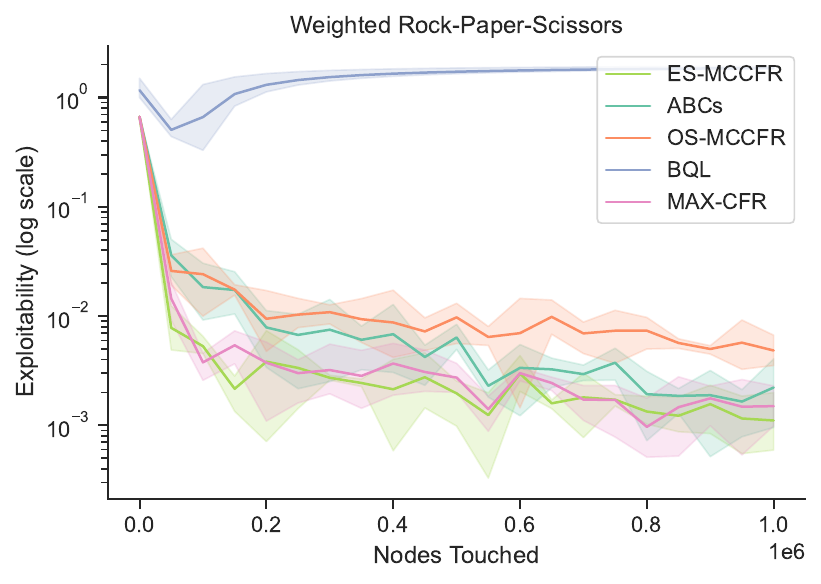} 
        \caption{Weighted RPS} \label{fig:weighted_rps}
    \end{subfigure}
    \setcounter{subfigure}{3}%
    \begin{subfigure}[t]{0.29\textwidth}
        \centering
        \includegraphics[width=\linewidth]{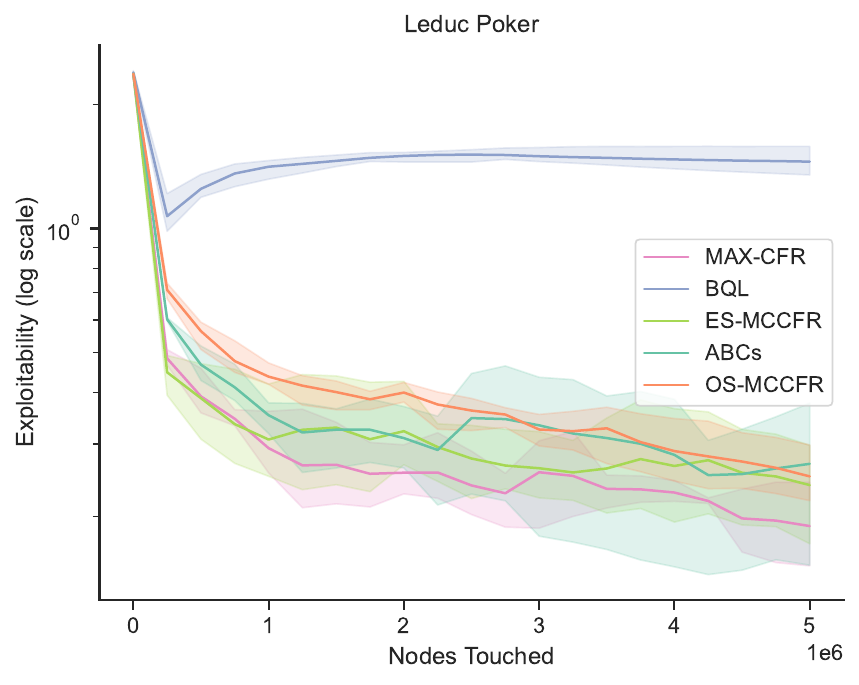} 
        \caption{Leduc Poker} \label{fig:leduc}
    \end{subfigure}
    \setcounter{subfigure}{5}%
    \begin{subfigure}[t]{0.29\textwidth}
        \centering
        \includegraphics[width=\linewidth]{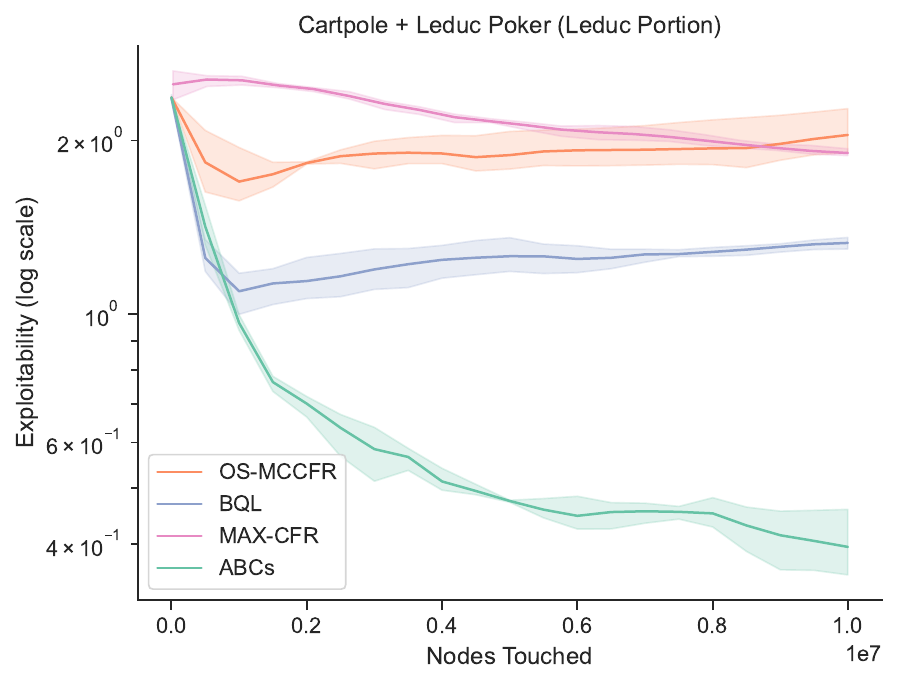} 
        \caption{Leduc (Stacked Environment)} \label{fig:stacked_leduc}
    \end{subfigure}

\caption{\alg{} matches the performance of BQL on stationary environments like Cartpole (a) and the performance of CFR methods across several non-stationary environments including weighted rock-paper-scissors (b), Kuhn poker (c), and Leduc poker (d). In a partially stationary environment with elements of both stationarity and nonstationarity, \alg{} outperforms both BQL and CFR, being the only algorithm capable of efficiently solving both the Cartpole (e) and Leduc poker (f) portion of the stacked environment.}
\end{figure*}

\paragraph{Cartpole}
We evaluate \alg{} on OpenAI Gym's Cartpole environment, a classic benchmark in reinforcement learning\footnote{We modify Cartpole slightly to ensure that it satisfies the Markovian property. See Appendix~\ref{sec:hyperparameters} for details.}.
Figure~\ref{fig:cartpole} shows that \alg{} performs comparably to BQL and substantially better than OS-MCCFR. ES-MCCFR is infeasible to run due to the large state space of Cartpole.

\paragraph{Weighted Rock Paper Scissors}
Weighted rock-paper-scissors is a classic nonstationary environment in which rock-paper-scissors is played, but there is twice as much reward if a player wins with Rock. 
Figure \ref{fig:weighted_rps} demonstrates that while BQL fails to learn the optimal policy, ABCs performs comparably to the standard MCCFR benchmarks.
\paragraph{Kuhn and Leduc Poker}
Kuhn and Leduc Poker are simplified versions of the full game of poker which operate as classic benchmarks for equilibrium finding algorithms. 
As illustrated in Figures~\ref{fig:kuhn} and \ref{fig:leduc}, BQL fails to converge on either Kuhn or Leduc Poker as they are not Markovian environments, and we show similar results for a wide variety of hyperparameters in Appendix~\ref{sec:add_experiments}. However, both \alg{} and MCCFR converge to a Nash equilibrium of the game, with \alg{} closely matching the performance of the standard MCCFR benchmarks.
\paragraph{A Partially Nonstationary Environment}
We also test \alg{}'s ability to adapt to a partially nonstationary environment, where its strengths are most properly utilized. Specifically, we consider a stacked environment, where each round consists of a game of Cartpole followed by
 a game of Leduc poker against another player. 
Figures \ref{fig:stacked_cartpole} and \ref{fig:stacked_leduc} show that BQL does not achieve the maximum score in this combined game,  
failing to converge on Leduc poker. Although
CFR based methods are theoretically capable of learning both games, MAXCFR wastes substantial time conducting unnecessary exploration of the game tree in the larger Cartpole setting, failing to converge on Leduc. OS-MCCFR similarly fails to learn either environment, though in this case, the issue preventing convergence on Leduc relates more to variance induced by importance sampling on such a deep game tree rather than unnecessary exploration. As such, \alg{} outperforms all MCCFR variants and BQL by better allocating resources between the stationary and nonstationary portions of the environment.

\section{Conclusion}
We introduce \alg{}, a unified algorithm combining the best parts of both Boltzmann Q-Learning and Counterfactual Regret Minimization. Theoretically, we show that \alg{} simultaneously matches the performance of BQL up to an $O(A)$ factor in MDPs while also guaranteeing convergence to Nash equilibria in two-player zero-sum games (under the assumption of access to a perfect oracle for stationarity). To our knowledge, this is the first such algorithm of its kind. Empirically, we show that these guarantees do not just lie in the realm of theory. In experiments, \alg{} performs comparably to BQL on stationary environments and CFR on nonstationary ones, and is able to exceed the performance of both algorithms in partially nonstationary domains. Our empirical experiments do not require any assumptions regarding access to a perfect oracle and use the detector described in Algorithm~\ref{alg:detector}.

We give a number of limitations of \alg{}, which we leave to future work. While Theorem~\ref{thm:convergencetone} guarantees that \alg{} converges to a Nash equilibrium in two-player zero-sum games, it both requires access to a perfect oracle and does not provide a bound on the rate of convergence.
Additionally, we run experiments for \alg{} only on relatively small environments due to computational reasons.
For future work, we hope to scale \alg{} to larger settings such as the Arcade Learning Environment \cite{bellemare13arcade} and larger games such as Go or Poker using function approximation and methods such as DQN \cite{mnih2013playing} or DeepCFR \cite{brown2019deep}.

\ifdefined\isarxiv
\else
    \section{Broader Impacts}
    This paper presents work whose goal is to advance the field of Machine Learning. There are many potential societal consequences of our work, none which we feel must be specifically highlighted here.
\fi

\appendix
\bibliography{icml24}

\begin{thebibliography}{47}
\providecommand{\natexlab}[1]{#1}
\providecommand{\url}[1]{\texttt{#1}}
\expandafter\ifx\csname urlstyle\endcsname\relax
  \providecommand{\doi}[1]{doi: #1}\else
  \providecommand{\doi}{doi: \begingroup \urlstyle{rm}\Url}\fi

\bibitem[Arora et~al.(2012{\natexlab{a}})Arora, Dekel, and Tewari]{arora2012online}
Arora, R., Dekel, O., and Tewari, A.
\newblock Online bandit learning against an adaptive adversary: From regret to policy regret.
\newblock In \emph{Proceedings of the 29th {{International Coference}} on {{International Conference}} on {{Machine Learning}}}, {{ICML}}'12, pp.\  1747--1754. {Omnipress}, 2012{\natexlab{a}}.
\newblock ISBN 978-1-4503-1285-1.

\bibitem[Arora et~al.(2012{\natexlab{b}})Arora, Hazan, and Kale]{arora2005fast}
Arora, S., Hazan, E., and Kale, S.
\newblock The multiplicative weights update method: a meta-algorithm and applications.
\newblock \emph{Theory of Computing}, 8\penalty0 (1):\penalty0 121--164, 2012{\natexlab{b}}.

\bibitem[Bakhtin et~al.(2023)Bakhtin, Wu, Lerer, Gray, Jacob, Farina, Miller, and Brown]{bakhtin2022mastering}
Bakhtin, A., Wu, D.~J., Lerer, A., Gray, J., Jacob, A.~P., Farina, G., Miller, A.~H., and Brown, N.
\newblock Mastering the game of no-press diplomacy via human-regularized reinforcement learning and planning.
\newblock In \emph{ICLR}, 2023.

\bibitem[{Bellemare} et~al.(2013){Bellemare}, {Naddaf}, {Veness}, and {Bowling}]{bellemare13arcade}
{Bellemare}, M.~G., {Naddaf}, Y., {Veness}, J., and {Bowling}, M.
\newblock The arcade learning environment: An evaluation platform for general agents.
\newblock \emph{Journal of Artificial Intelligence Research}, 47:\penalty0 253--279, Jun 2013.

\bibitem[Bellemare et~al.(2017)Bellemare, Dabney, and Munos]{BellemareDistDQN}
Bellemare, M.~G., Dabney, W., and Munos, R.
\newblock A distributional perspective on reinforcement learning.
\newblock In \emph{Proceedings of the 34th International Conference on Machine Learning - Volume 70}, ICML'17, pp.\  449–458. JMLR.org, 2017.

\bibitem[Billingsley(2012)]{billingsley2012probability}
Billingsley, P.
\newblock \emph{Probability and Measure}.
\newblock John Wiley \& Sons, 2012.

\bibitem[Brafman \& Tennenholtz(2002)Brafman and Tennenholtz]{brafman2002rmax}
Brafman, R.~I. and Tennenholtz, M.
\newblock R-max-a general polynomial time algorithm for near-optimal reinforcement learning.
\newblock \emph{Journal of Machine Learning Research}, 3\penalty0 (Oct):\penalty0 213--231, 2002.

\bibitem[Brockman et~al.(2016)Brockman, Cheung, Pettersson, Schneider, Schulman, Tang, and Zaremba]{brockman2016openai}
Brockman, G., Cheung, V., Pettersson, L., Schneider, J., Schulman, J., Tang, J., and Zaremba, W.
\newblock Openai gym.
\newblock \emph{arXiv preprint arXiv:1606.01540}, 2016.

\bibitem[Brown \& Sandholm(2018)Brown and Sandholm]{brown2018superhuman}
Brown, N. and Sandholm, T.
\newblock Superhuman {{AI}} for heads-up no-limit poker: {{Libratus}} beats top professionals.
\newblock \emph{Science}, 359\penalty0 (6374):\penalty0 418--424, 2018.

\bibitem[Brown \& Sandholm(2019{\natexlab{a}})Brown and Sandholm]{brown2019solving}
Brown, N. and Sandholm, T.
\newblock Solving imperfect-information games via discounted regret minimization.
\newblock In \emph{Proceedings of the {{AAAI Conference}} on {{Artificial Intelligence}}}, volume~33, pp.\  1829--1836, 2019{\natexlab{a}}.

\bibitem[Brown \& Sandholm(2019{\natexlab{b}})Brown and Sandholm]{brown2019superhuman}
Brown, N. and Sandholm, T.
\newblock Superhuman {{AI}} for multiplayer poker.
\newblock \emph{Science}, 365\penalty0 (6456):\penalty0 885--890, 2019{\natexlab{b}}.

\bibitem[Brown et~al.(2019)Brown, Lerer, Gross, and Sandholm]{brown2019deep}
Brown, N., Lerer, A., Gross, S., and Sandholm, T.
\newblock Deep counterfactual regret minimization.
\newblock In \emph{International Conference on Machine Learning}, pp.\  793--802. {PMLR}, 2019.

\bibitem[Brown et~al.(2020)Brown, Bakhtin, Lerer, and Gong]{brown2020combining}
Brown, N., Bakhtin, A., Lerer, A., and Gong, Q.
\newblock Combining deep reinforcement learning and search for imperfect-information games.
\newblock In Larochelle, H., Ranzato, M., Hadsell, R., Balcan, M., and Lin, H. (eds.), \emph{Advances in Neural Information Processing Systems}, volume~33, pp.\  17057--17069. Curran Associates, Inc., 2020.

\bibitem[Celli et~al.(2020)Celli, Marchesi, Farina, and Gatti]{celli2020noregret}
Celli, A., Marchesi, A., Farina, G., and Gatti, N.
\newblock No-{{Regret Learning Dynamics}} for {{Extensive-Form Correlated Equilibrium}}.
\newblock \emph{Advances in Neural Information Processing Systems}, 33:\penalty0 7722--7732, 2020.

\bibitem[Cesa-Bianchi et~al.(2017)Cesa-Bianchi, Gentile, Lugosi, and Neu]{cesa-bianchi2017boltzmann}
Cesa-Bianchi, N., Gentile, C., Lugosi, G., and Neu, G.
\newblock Boltzmann {{Exploration Done Right}}.
\newblock In \emph{Advances in {{Neural Information Processing Systems}}}, volume~30. {Curran Associates, Inc.}, 2017.

\bibitem[Daskalakis et~al.(2009)Daskalakis, Goldberg, and Papadimitriou]{daskalakis2009complexity}
Daskalakis, C., Goldberg, P.~W., and Papadimitriou, C.~H.
\newblock The {{Complexity}} of {{Computing}} a {{Nash Equilibrium}}.
\newblock \emph{SIAM Journal on Computing}, 39\penalty0 (1):\penalty0 195--259, 2009.
\newblock ISSN 0097-5397.
\newblock \doi{10.1137/070699652}.

\bibitem[Freund \& Schapire(1997)Freund and Schapire]{freund1997decision}
Freund, Y. and Schapire, R.~E.
\newblock A decision-theoretic generalization of on-line learning and an application to boosting.
\newblock In \emph{Journal of Computer and System Sciences}, volume~55, pp.\  119--139, 1997.

\bibitem[Hart \& Mas-Colell(2000)Hart and Mas-Colell]{hart2000simple}
Hart, S. and Mas-Colell, A.
\newblock A simple adaptive procedure leading to correlated equilibrium.
\newblock \emph{Econometrica}, 68\penalty0 (5):\penalty0 1127--1150, 2000.

\bibitem[Hessel et~al.(2018)Hessel, Modayil, van Hasselt, Schaul, Ostrovski, Dabney, Horgan, Piot, Azar, and Silver]{Hessel18Rainbow}
Hessel, M., Modayil, J., van Hasselt, H., Schaul, T., Ostrovski, G., Dabney, W., Horgan, D., Piot, B., Azar, M.~G., and Silver, D.
\newblock Rainbow: Combining improvements in deep reinforcement learning.
\newblock In \emph{Proceedings of the The Thirty-Second AAAI Conference on Artificial Intelligence (AAAI-18)}, 2018.

\bibitem[Hu \& Wellman(2003)Hu and Wellman]{hu2003nash}
Hu, J. and Wellman, M.~P.
\newblock Nash q-learning for general-sum stochastic games.
\newblock \emph{J. Mach. Learn. Res.}, 4\penalty0 (null):\penalty0 1039–1069, dec 2003.
\newblock ISSN 1532-4435.

\bibitem[Lanctot et~al.(2009)Lanctot, Waugh, Zinkevich, and Bowling]{lanctot2009monte}
Lanctot, M., Waugh, K., Zinkevich, M., and Bowling, M.
\newblock Monte {{Carlo Sampling}} for {{Regret Minimization}} in {{Extensive Games}}.
\newblock In \emph{Advances in {{Neural Information Processing Systems}}}, volume~22. {Curran Associates, Inc.}, 2009.

\bibitem[Lanctot et~al.(2019)Lanctot, Lockhart, Lespiau, Zambaldi, Upadhyay, Pérolat, Srinivasan, Timbers, Tuyls, Omidshafiei, et~al.]{lanctot2019openspiel}
Lanctot, M., Lockhart, E., Lespiau, J.-B., Zambaldi, V., Upadhyay, S., Pérolat, J., Srinivasan, S., Timbers, F., Tuyls, K., Omidshafiei, S., et~al.
\newblock {{OpenSpiel}}: {{A}} framework for reinforcement learning in games.
\newblock 2019.

\bibitem[Laurent \& La\"{e}titia~Matignon(2011)Laurent and La\"{e}titia~Matignon]{LaurentEtAl2011NonMarkovian}
Laurent, G.~J. and La\"{e}titia~Matignon, N. L. F.-P.
\newblock The world of independent learners is not markovian.
\newblock \emph{International Journal of Knowledge-Based and Intelligent Engineering Systems}, 15\penalty0 (1):\penalty0 55--64, 2011.
\newblock 10.3233/KES-2010-0206. hal-00601941.

\bibitem[McAleer et~al.(2022)McAleer, Farina, Lanctot, and Sandholm]{mcaleer2022escher}
McAleer, S., Farina, G., Lanctot, M., and Sandholm, T.
\newblock Escher: Eschewing importance sampling in games by computing a history value function to estimate regret.
\newblock \emph{arXiv preprint arXiv:2206.04122}, 2022.

\bibitem[{Meta Fundamental AI Research Diplomacy Team (FAIR)} et~al.(2022){Meta Fundamental AI Research Diplomacy Team (FAIR)}, Bakhtin, Brown, Dinan, Farina, Flaherty, Fried, Goff, Gray, Hu, Jacob, Komeili, Konath, Kwon, Lerer, Lewis, Miller, Mitts, Renduchintala, Roller, Rowe, Shi, Spisak, Wei, Wu, Zhang, and Zijlstra]{metafundamentalairesearchdiplomacyteamfair2022humanlevel}
{Meta Fundamental AI Research Diplomacy Team (FAIR)}, Bakhtin, A., Brown, N., Dinan, E., Farina, G., Flaherty, C., Fried, D., Goff, A., Gray, J., Hu, H., Jacob, A.~P., Komeili, M., Konath, K., Kwon, M., Lerer, A., Lewis, M., Miller, A.~H., Mitts, S., Renduchintala, A., Roller, S., Rowe, D., Shi, W., Spisak, J., Wei, A., Wu, D., Zhang, H., and Zijlstra, M.
\newblock Human-level play in the game of {{Diplomacy}} by combining language models with strategic reasoning.
\newblock \emph{Science}, 378\penalty0 (6624):\penalty0 1067--1074, 2022.
\newblock \doi{10.1126/science.ade9097}.

\bibitem[Mnih et~al.(2013)Mnih, Kavukcuoglu, Silver, Graves, Antonoglou, Wierstra, and Riedmiller]{mnih2013playing}
Mnih, V., Kavukcuoglu, K., Silver, D., Graves, A., Antonoglou, I., Wierstra, D., and Riedmiller, M.
\newblock Playing atari with deep reinforcement learning.
\newblock \emph{arXiv preprint arXiv:1312.5602}, 2013.

\bibitem[Mnih et~al.(2015)Mnih, Kavukcuoglu, Silver, Rusu, Veness, Bellemare, Graves, Riedmiller, Fidjeland, Ostrovski, Petersen, Beattie, Sadik, Antonoglou, King, Kumaran, Wierstra, Legg, and Hassabis]{Mnih2015DQN}
Mnih, V., Kavukcuoglu, K., Silver, D., Rusu, A.~A., Veness, J., Bellemare, M.~G., Graves, A., Riedmiller, M., Fidjeland, A.~K., Ostrovski, G., Petersen, S., Beattie, C., Sadik, A., Antonoglou, I., King, H., Kumaran, D., Wierstra, D., Legg, S., and Hassabis, D.
\newblock Human-level control through deep reinforcement learning.
\newblock \emph{Nature}, 518\penalty0 (7540):\penalty0 529--533, 2015.
\newblock \doi{10.1038/nature14236}.

\bibitem[Moravčík et~al.(2017)Moravčík, Schmid, Burch, Lisý, Morrill, Bard, Davis, Waugh, Johanson, and Bowling]{moravcik2017deepstack}
Moravčík, M., Schmid, M., Burch, N., Lisý, V., Morrill, D., Bard, N., Davis, T., Waugh, K., Johanson, M., and Bowling, M.
\newblock {{DeepStack}}: {{Expert-Level Artificial Intelligence}} in {{No-Limit Poker}}.
\newblock \emph{Science}, 356\penalty0 (6337):\penalty0 508--513, 2017.
\newblock ISSN 0036-8075, 1095-9203.
\newblock \doi{10.1126/science.aam6960}.

\bibitem[Morrill et~al.(2021)Morrill, D'Orazio, Sarfati, Lanctot, Wright, Greenwald, and Bowling]{morrill2021hindsight}
Morrill, D., D'Orazio, R., Sarfati, R., Lanctot, M., Wright, J.~R., Greenwald, A., and Bowling, M.
\newblock Hindsight and {{Sequential Rationality}} of {{Correlated Play}}.
\newblock 2021.
\newblock \doi{10.48550/arXiv.2012.05874}.

\bibitem[OpenAI(2019)]{openai2019five}
OpenAI.
\newblock Openai five.
\newblock \emph{https://openai.com/blog/openai-five/}, 2019.

\bibitem[Pearson(1900)]{pearson1900criterion}
Pearson, K.
\newblock On the criterion that a given system of deviations from the probable in the case of a correlated system of variables is such that it can be reasonably supposed to have arisen from random sampling.
\newblock \emph{The London, Edinburgh, and Dublin Philosophical Magazine and Journal of Science}, 50\penalty0 (302):\penalty0 157--175, 1900.

\bibitem[Schmid et~al.(2021)Schmid, Moravcik, Burch, Kadlec, Davidson, Waugh, Bard, Timbers, Lanctot, Holland, Davoodi, Christianson, and Bowling]{Schmid21PoG}
Schmid, M., Moravcik, M., Burch, N., Kadlec, R., Davidson, J., Waugh, K., Bard, N., Timbers, F., Lanctot, M., Holland, Z., Davoodi, E., Christianson, A., and Bowling, M.
\newblock Player of games.
\newblock \emph{CoRR}, abs/2112.03178, 2021.

\bibitem[Schulman et~al.(2017)Schulman, Wolski, Dhariwal, Radford, and Klimov]{schulman2017proximal}
Schulman, J., Wolski, F., Dhariwal, P., Radford, A., and Klimov, O.
\newblock Proximal policy optimization algorithms.
\newblock \emph{arXiv preprint arXiv:1707.06347}, 2017.

\bibitem[Shani et~al.(2013)Shani, Pineau, and Kaplow]{Shani13survey}
Shani, G., Pineau, J., and Kaplow, R.
\newblock A survey of point-based pomdp solvers.
\newblock \emph{Autonomous Agents and Multi-Agent Systems}, 27:\penalty0 1--15, 2013.

\bibitem[Silver et~al.(2016)Silver, Huang, Maddison, Guez, Sifre, van~den Driessche, Schrittwieser, Antonoglou, Panneershelvam, Lanctot, Dieleman, Grewe, Nham, Kalchbrenner, Sutskever, Lillicrap, Leach, Kavukcuoglu, Graepel, and Hassabis]{silver2016mastering}
Silver, D., Huang, A., Maddison, C.~J., Guez, A., Sifre, L., van~den Driessche, G., Schrittwieser, J., Antonoglou, I., Panneershelvam, V., Lanctot, M., Dieleman, S., Grewe, D., Nham, J., Kalchbrenner, N., Sutskever, I., Lillicrap, T., Leach, M., Kavukcuoglu, K., Graepel, T., and Hassabis, D.
\newblock Mastering the game of {{Go}} with deep neural networks and tree search.
\newblock \emph{Nature}, 529\penalty0 (7587):\penalty0 484--489, 2016.
\newblock ISSN 0028-0836, 1476-4687.
\newblock \doi{10.1038/nature16961}.

\bibitem[Silver et~al.(2017)Silver, Schrittwieser, Simonyan, Antonoglou, Huang, Guez, Hubert, Baker, Lai, Bolton, et~al.]{silver2017mastering}
Silver, D., Schrittwieser, J., Simonyan, K., Antonoglou, I., Huang, A., Guez, A., Hubert, T., Baker, L., Lai, M., Bolton, A., et~al.
\newblock Mastering the game of go without human knowledge.
\newblock \emph{Nature}, 550\penalty0 (7676):\penalty0 354--359, 2017.

\bibitem[Singh et~al.(2000)Singh, Jaakkola, Littman, and Szepesv{\'a}ri]{singh2000convergence}
Singh, S., Jaakkola, T., Littman, M.~L., and Szepesv{\'a}ri, C.
\newblock Convergence results for single-step on-policy reinforcement-learning algorithms.
\newblock \emph{Machine learning}, 38:\penalty0 287--308, 2000.

\bibitem[Sokota et~al.(2023)Sokota, D'Orazio, Kolter, Loizou, Lanctot, Mitliagkas, Brown, and Kroer]{sokota2023unified}
Sokota, S., D'Orazio, R., Kolter, J.~Z., Loizou, N., Lanctot, M., Mitliagkas, I., Brown, N., and Kroer, C.
\newblock A unified approach to reinforcement learning, quantal response equilibria, and two-player zero-sum games.
\newblock In \emph{ICLR}, 2023.

\bibitem[Srinivasan et~al.(2018)Srinivasan, Lanctot, Zambaldi, P{\'e}rolat, Tuyls, Munos, and Bowling]{srinivasan2018actor}
Srinivasan, S., Lanctot, M., Zambaldi, V., P{\'e}rolat, J., Tuyls, K., Munos, R., and Bowling, M.
\newblock Actor-critic policy optimization in partially observable multiagent environments.
\newblock \emph{Advances in Neural Information Processing Systems}, 31, 2018.

\bibitem[Sutton \& Barto(2018)Sutton and Barto]{sutton2018reinforcement}
Sutton, R.~S. and Barto, A.~G.
\newblock \emph{Reinforcement Learning: An Introduction}.
\newblock Adaptive Computation and Machine Learning Series. {The MIT Press}, second edition edition, 2018.
\newblock ISBN 978-0-262-03924-6.

\bibitem[Tammelin(2014)]{tammelin2014solving}
Tammelin, O.
\newblock Solving large imperfect information games using {{CFR}}+.
\newblock 2014.

\bibitem[Tesauro et~al.(1995)]{tesauro1995temporal}
Tesauro, G. et~al.
\newblock Temporal difference learning and td-gammon.
\newblock \emph{Communications of the ACM}, 38\penalty0 (3):\penalty0 58--68, 1995.

\bibitem[Vinyals et~al.(2019)Vinyals, Babuschkin, Czarnecki, Mathieu, Dudzik, Chung, Choi, Powell, Ewalds, Georgiev, Oh, Horgan, Kroiss, Danihelka, Huang, Sifre, Cai, Agapiou, Jaderberg, Vezhnevets, Leblond, Pohlen, Dalibard, Budden, Sulsky, Molloy, Paine, Gulcehre, Wang, Pfaff, Wu, Ring, Yogatama, Wünsch, McKinney, Smith, Schaul, Lillicrap, Kavukcuoglu, Hassabis, Apps, and Silver]{vinyals2019grandmaster}
Vinyals, O., Babuschkin, I., Czarnecki, W.~M., Mathieu, M., Dudzik, A., Chung, J., Choi, D.~H., Powell, R., Ewalds, T., Georgiev, P., Oh, J., Horgan, D., Kroiss, M., Danihelka, I., Huang, A., Sifre, L., Cai, T., Agapiou, J.~P., Jaderberg, M., Vezhnevets, A.~S., Leblond, R., Pohlen, T., Dalibard, V., Budden, D., Sulsky, Y., Molloy, J., Paine, T.~L., Gulcehre, C., Wang, Z., Pfaff, T., Wu, Y., Ring, R., Yogatama, D., Wünsch, D., McKinney, K., Smith, O., Schaul, T., Lillicrap, T., Kavukcuoglu, K., Hassabis, D., Apps, C., and Silver, D.
\newblock Grandmaster level in {{StarCraft II}} using multi-agent reinforcement learning.
\newblock \emph{Nature}, 575\penalty0 (7782):\penalty0 350--354, 2019.
\newblock ISSN 1476-4687.
\newblock \doi{10.1038/s41586-019-1724-z}.

\bibitem[von Neumann(1928)]{vonneumann1928zur}
von Neumann, J.
\newblock Zur theorie der gesellschaftsspiele.
\newblock \emph{Mathematische Annalen}, 100\penalty0 (1):\penalty0 295--320, 1928.
\newblock ISSN 1432-1807.
\newblock \doi{10.1007/BF01448847}.

\bibitem[Wang et~al.(2016)Wang, Schaul, Hessel, van Hasselt, Lanctot, and de~Freitas]{Wang16Dueling}
Wang, Z., Schaul, T., Hessel, M., van Hasselt, H., Lanctot, M., and de~Freitas, N.
\newblock Dueling network architectures for deep reinforcement learning.
\newblock In \emph{Proceedings of the International Conference on Machine Learning (ICML)}, pp.\  1995--2003, 2016.

\bibitem[Zhang et~al.(2022)Zhang, Lerer, and Brown]{zhang2022equilibrium}
Zhang, H., Lerer, A., and Brown, N.
\newblock Equilibrium finding in normal-form games via greedy regret minimization.
\newblock \emph{Proceedings of the AAAI Conference on Artificial Intelligence}, 36\penalty0 (9):\penalty0 9484--9492, June 2022.
\newblock \doi{10.1609/aaai.v36i9.21181}.

\bibitem[Zinkevich et~al.(2008)Zinkevich, Johanson, Bowling, and Piccione]{zinkevich2008regret}
Zinkevich, M., Johanson, M., Bowling, M., and Piccione, C.
\newblock Regret minimization in games with incomplete information.
\newblock In \emph{Advances in Neural Information Processing Systems}, pp.\  1729--1736, 2008.

\end{thebibliography}
\bibliographystyle{icml2024}

\newpage
\appendix
\onecolumn
\section{Nash Equilibria and Exploitability}

We formally describe a Nash equilibrium and the concept of exploitability.
Let $\pi: S \to \Delta (\mathcal{A)}$ denote the \emph{joint policy} of all players in the game. We assume that $\pi$ has full support: for all actions $a \in \mathcal{A}(s)$, $\pi(a \mid s) > 0$. Let $\Sigma$ denote the space of all possible joint policies, with $\Sigma_i$ defined accordingly. Let $u_i(\pi)$ denote the expected utility to player $i$ under joint policy $\pi$. A joint policy $\pi$ is a \emph{Nash equilibrium} of a game if for all players $i$, $\pi_i \in \argmax_{\pi'_i \in \Sigma_i} u_i(\pi'_i, \pi_{-i})$. Similarly, policy $\pi$ is an \emph{$\epsilon$-Nash equilibrium} if for all players $i$, $u_i(\pi) \geq \max_{\pi'_i \in \Sigma_i} u_i(\pi'_i, \pi_{-i}) - \epsilon$.

In a two-player zero-sum game, playing a Nash equilibrium  guarantees you the minimax value; in other words, it maximizes the utility you receive given that your opponent always perfectly responds to your strategy. \emph{Exploitability} is often used to measure the distance of a policy $\pi$ from a Nash equilibrium, bounding  the worst possible losses from playing a given policy. The total exploitability of a policy $\pi$ is given by $\sum_{i \in M} \max_{\pi'_i \in \Sigma_i} u_i(\pi'_i, \pi_{-i}) - u_i(\pi)$.
In a two-player zero-sum game, the exploitability of a policy $\pi$ is given by $\max_{\pi'_1 \in \Sigma_1} u_1(\pi'_1, \pi_2) + \max_{\pi'_2 \in \Sigma_2} u_2(\pi_1, \pi'_2).$

\section{Markov Stationarity vs. Child Stationarity}
 \label{markov_vs_cs}

In this section, we  highlight the relationship between Markov stationarity and \cs{}. 
An MDP satisfies \cs{} at all infostates and actions $s, a$ and with respect to all possible distributions over joint policies $\sigma, \sigma'$ (see  Section~\ref{sec:childstationarity}).

 Here we highlight an environment that does not satisfy Markov stationarity but  satisfies \cs{}.
Consider an environment with a single infostate and two possible hidden states. Furthermore, let $h_n = h_{n \mod 2}$. Such an environment is not stationary in the Markov sense, as the hidden states change with each episode. However, defining $\sigma_A, \sigma_B$ as the empirical distribution of joint policies across the first and second half of timesteps as described in Section~\ref{sec:measuringchildstationarity}, such an environment would (asymptotically) satisfy \cs{} as the distribution of hidden states for the first and second half would approach $\br{\frac{1}{2}, \frac{1}{2}}$ in the limit. 
As shown in Section~\ref{sec:abcsconvergence}, \cs{}, despite being weaker than Markov stationarity still allows BQL to converge to the optimal policy.

\section{The \cfr{} Algorithm}
\label{sec:maxcfr}
MAX-CFR is a variant of ES-MCCFR that \alg{} reverts to if the environment fails child stationarity at every possible $s, a$ (i.e., is ``maximally nonstationary").
  \cfr{} can also be viewed as a bootstrapped variant of {\em external-sampling MCCFR}~\cite{lanctot2009monte}. See Algorithm~\ref{alg:maxcfr}. 

\begin{algorithm}[h]
   \caption{\cfr}
   \label{alg:maxcfr}
\begin{algorithmic}
    \Require{$Q$, CNT initialized to $0$, discount factor $\gamma$, total episodes $N$. Initial observation and hidden states $s_0, h_0$.}
    \Statex
    \For{$n \gets 1 \textrm{ to } N$}
        \State MAXCFR$(h_0, s_0, Q, \gamma, n)$
    \EndFor
    \Statex
    \Function{MAXCFR}{$h, s, Q, \gamma, n$}
        \If{h is terminal}
            \Return{0}
        \EndIf
        \Let{$\pi^n(s)$}{$\texttt{softmax}\br{Q\br{s, *}, \tau=\frac{1}{\text{CNT}(s)}}$}
        \Let{$\text{CNT}(s)$}{$\text{CNT}(s) + 1$}

        \Statex
        \For{$a \in A(s)$}
            \Let{$\alpha$}{$1 / \text{CNT}(s)$}
            \Let{$h^\prime, s^\prime,  a^\prime, \nabla_{s, a}$}{GetChild$\br{h, s, a, n, Q, \gamma}$}
            \Let{$\nabla_{s^\prime, a^\prime}$}{$f\br{h^\prime, s^\prime, Q, \gamma, n}$}
            \Statex
            \Let{$\nabla_{s, a}$}{$\nabla_{s, a} + \text{CNT}(s^\prime) \cdot \nabla_{s^\prime, a^\prime}$}
            \Let{$Q(s, a)$}{$Q(s, a) + \alpha \nabla_{s, a}$}

            \Statex
        \EndFor
        \Let{$a^*$}{$\arg\max_a Q(s, a)$}
        \State \Return{$\nabla_{s, a^*}$}
    \EndFunction
\end{algorithmic}
\end{algorithm}

\subsection{Connecting \alg{}, MAX-CFR, and ES-MCCFR}
\label{sec:connect_cfr}

To make explicit the connection between \cfr{} and traditional external-sampling MCCFR \cite{lanctot2009monte} 
we adopt the Multiplicative Weights (MW) algorithm (also known as Hedge) as the underlying regret minimizer. In MW, a player's policy at iteration $n+1$ is given by
\[ \pi^{n+1}(s, a) = \frac{e^{\tau_n R^n(s, a)}}{\sum_{a' \in A(s)} e^{\tau_n R^n(s, a')}},\]
where $R^n(s, a)$ denotes the cumulative regret at iteration $n$, for action $a$ at infostate $s$. Hedge also includes a tunable hyperparameter $\tau_n$, which may be adjusted from iteration to iteration. Letting $\tau_n = 1$ across all iterations, we recover the softmax operator on cumulative regrets as a special case of Hedge.

We will assume there are no non-terminal rewards $r_t$, and use a discount factor of $\gamma = 1$, both of which are standard for the two-player zero-sum environments on which CFR is typically applied. We adopt $\tau_n = 1$ for simplicity, but note that the following equivalences hold for any schedule of $\tau_n$.

We first present \bootcfr{}, a ``bootstrapped" version of external-sampling MCCFR that is identical to the original ES-MCCFR algorithm.
 To make the comparison between \bootcfr{} and \cfr{} clear, we highlight the  difference between the two algorithms in cyan.
\begin{algorithm}[h]
   \caption{\bootcfr}
   \label{alg:bootcfr}
\begin{algorithmic}
    \Require{Same preconditions as MAX-CFR (Algorithm~\ref{alg:maxcfr}).}
    \Statex
    \For{$n \gets 1 \textrm{ to } N$}
        \State BOOTCFR$(h_0, s_0, Q, \gamma, n, \text{BOOTCFR})$
    \EndFor
    \Statex
    \Function{ModGetChild}{$h, s, a, \pi, Q, \gamma$}
        \State Sample $h^\prime, s^\prime \sim T^{\pi}\br{h^\prime \mid h, a}, S(h^\prime)$
        \State Sample $r \sim R^{\pi}\br{r \mid s, a, s^\prime}$
        \State \Return{$h', s', r$}
    \EndFunction
    \Statex
    \Function{BOOTCFR}{$h, s, Q, \gamma, n$}
        \If{h is terminal}
            \For{$a \in A(s)$}
                \Let{$\Delta_{s, a}$}{0}
            \EndFor
            \Return{}
        \EndIf
        \Statex
        \Let{$\pi^n(s)$}{$\texttt{softmax}\br{Q\br{s, *}, \tau=\frac{1}{\text{CNT}(s)}}$}
        \Let{$\text{CNT}(s)$}{$\text{CNT}(s) + 1$}
        \Statex
        \For{$a \in A(s)$}
            \Let{$\Delta_{s, a}$}{0}
            \Let{$h^\prime, s^\prime,  r$}{ModGetChild$\br{h, s, a, \pi^n, Q, \gamma}$}
            \For{$a' \in A(s^\prime)$}
                \Let{$\nabla_{s, a, a'}$}{$r + \gamma Q(s', a')) - Q(s, a)$}
            \EndFor
            \Statex
            \Let{$Q_{\mathrm{old}}$}{$Q(s', *)$}
            \State BOOTCFR$\br{h^\prime, s^\prime, Q, \gamma, n}$
            \Let{$Q_{\mathrm{new}}$}{$Q(s', *)$}
            \Statex
            \textcolor{cyan}{\Let{$\pi_{\mathrm{eval}}^n(s')$}{$\texttt{softmax}\br{Q_{\mathrm{old}}\br{s', *}, \tau=\frac{1}{\text{CNT}(s') - 1}}$}}
            \For{$a^\prime \in A(s^\prime)$}
                \Let{$\Delta_{s, a}$}{$\Delta_{s, a} + \pi_{\mathrm{eval}}^n(s', a') \cdot \left[\frac{1}{\text{CNT}(s)}\left(\nabla_{s, a, a'} + \text{CNT}(s^\prime) \cdot \Delta_{s^\prime, a^\prime}\right)\right]$}
            \EndFor
            \Let{$Q(s, a)$}{$Q(s, a) + \Delta_{s, a}$}
        \EndFor
        \Statex
        \Return{}
    \EndFunction
\end{algorithmic}
\end{algorithm}

\begin{lemma}
    \label{hedgerewards}
    Multiplicative Weights / Hedge is invariant to the choice of cumulative regrets or cumulative rewards/utility.
\end{lemma}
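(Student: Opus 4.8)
The plan is to show that the Hedge/Multiplicative Weights update is unchanged if we replace cumulative counterfactual regrets $R^n(s,a)$ by cumulative counterfactual values (rewards/utility) $U^n(s,a) = \sum_{m \le n} \tilde v_i^{\pi^m}(s,a)$. The key observation is that, by the definitions given in Section~2, the counterfactual regret decomposes as $r^{\pi}(s,a) = v_i^{\pi_{(s\to a)}}(s) - v_i^{\pi}(s)$, where the subtracted term $v_i^{\pi}(s)$ does \emph{not depend on the action $a$}. Hence, summing over iterations, $R^n(s,a) = U^n(s,a) - C^n(s)$, where $C^n(s) := \sum_{m\le n} v_i^{\pi^m}(s)$ is a quantity that is common to every action $a \in A(s)$.

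First I would write out the Hedge policy explicitly: $\pi^{n+1}(s,a) = e^{\tau_n R^n(s,a)} / \sum_{a' \in A(s)} e^{\tau_n R^n(s,a')}$. Substituting $R^n(s,a) = U^n(s,a) - C^n(s)$, the factor $e^{-\tau_n C^n(s)}$ appears in both the numerator and every term of the denominator, and therefore cancels:
\begin{equation*}
\pi^{n+1}(s,a) = \frac{e^{\tau_n (U^n(s,a) - C^n(s))}}{\sum_{a'} e^{\tau_n (U^n(s,a') - C^n(s))}} = \frac{e^{\tau_n U^n(s,a)}}{\sum_{a'} e^{\tau_n U^n(s,a')}}.
\end{equation*}
This establishes the invariance for a single infostate $s$ at a single iteration. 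Second, I would note that this argument is entirely local and per-iteration: it holds for every $s$ and every $n$ regardless of the temperature schedule $\tau_n$, since $\tau_n$ is fixed at the moment of the update and the cancellation does not interact with it. An easy induction on $n$ (using that $\pi^{n+1}$ depends on the history only through $R^n$, equivalently through $U^n$) then shows the two bookkeeping schemes produce identical policy sequences, hence identical regret guarantees. The same reasoning covers the sampled/Monte-Carlo versions $\tilde r_i(s,a)$ and $\tilde v_i^\pi(s,a)$ used in ES-MCCFR, since the excerpt already notes $\tilde r_i(s,a) = \tilde v_i^{\pi_{(s\to a)}}(s,a) - \tilde v_i^{\pi}(s,a)$ with the second term again independent of $a$ (it is the sampled counterfactual value at the parent, not at the child after $a$) — I would double-check the indexing here to be sure the subtracted term is genuinely $a$-independent.

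The only subtlety — and the one place worth being careful — is confirming that the ``constant'' really is constant \emph{across all actions at the same infostate and same iteration}, which is exactly what the additive-constant-invariance of the softmax/Hedge exponent requires; constants that vary across infostates or across iterations are harmless because the softmax at $s$ only ever compares actions in $A(s)$ at one fixed $n$. There is no real obstacle beyond stating this cleanly: the lemma is essentially the classical fact that Hedge's iterate is shift-invariant in its loss/reward vector, specialized to the CFR decomposition $R = U - (\text{baseline})$. I would therefore keep the proof to the two displayed lines above plus a sentence pointing out that the baseline $v_i^\pi(s)$ (resp.\ its sampled analogue) is action-independent, and a remark that this is what lets \alg{} and \cfr{} store cumulative $Q$-values rather than cumulative regrets.
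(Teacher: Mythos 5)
Your proof is correct and follows essentially the same route as the paper's: both rest on the shift-invariance of the Hedge/softmax update together with the observation that the subtracted term $v_i^{\pi}(s)$ in the counterfactual regret is independent of the action $a$, so cumulative regrets and cumulative counterfactual values differ only by an action-independent constant that cancels in the softmax. You merely make the cancellation explicit where the paper cites shift-invariance as well known; no substantive difference.
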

\begin{proof}
    It is well-known and easy to verify that Multiplicative Weights / Hedge is shift-invariant. That is,
    \[\mathrm{Hedge}(x)  = \mathrm{Hedge}(x + c).\]
    
    Over $N$ iterations, the cumulative counterfactual regret of not taking action $a$ at infostate $s$ is given by
    \[R^N(s, a) = \frac{1}{N} \sum_{n=1}^N v_n^{\pi_{(s\to a)}^n}(s) - v_n^{\pi^n}(s),\]
    where $\pi^n_{s\to a}$ is identical to $\pi^n$, except that action $a$ is always taken given infostate $s$. Since the second term does not depend on the action $a$, this is equivalent to using Hedge on the cumulative counterfactual rewards
    \[\frac{1}{N} \sum_{n=1}^N v_n^{\pi_{(s\to a)}^n}(s),\]
     by the shift-invariance of Hedge.
\end{proof}

\begin{lemma}
    \label{equivalentexploration}
    \bootcfr{} and external-sampling MCCFR use the same current policy and traverse the game tree in the same way.
\end{lemma}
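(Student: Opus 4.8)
The plan is to prove a single joint invariant by induction — on the iteration count and, within each iteration, structurally on the subtree rooted at an infostate (processed from terminal histories upward) — and then read off both halves of the statement (identical current policy; identical traversal) as consequences. Throughout I work in the single-agent POMDP obtained by freezing and folding the other players' (sampled) actions into the transition kernel $T^\pi$, so that within one traversal every infostate belongs to the traverser; in this view ``external sampling'' for the traverser is exactly the \texttt{for} loop over $a \in A(s)$ that \bootcfr{} runs, while the opponents' sampling is already baked into \texttt{ModGetChild}. Fix an iteration, and for an infostate $s$ let $m=m(s)$ be the number of visits to $s$ up to and including this iteration's traversal. Using $\gamma=1$, no non-terminal rewards, and $Q\equiv 0$ at terminal histories (the conventions of Section~\ref{sec:connect_cfr}), define $\tilde v_\ell(s,a)$ to be the sampled counterfactual value ES-MCCFR assigns to action $a$ at $s$ on $s$'s $\ell$-th visit, i.e. the policy-weighted backup $\sum_{a'\in A(s')}\pi^\ell(a'\mid s')\,\tilde v_\ell(s',a')$ where $s'$ is the child sampled on that visit and $\pi^\ell$ is the current policy at $s'$ during that visit. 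The invariant to maintain is: (I) the policy \bootcfr{} uses at $s$ on its $m$-th visit equals the ES-MCCFR current policy at $s$ at that point; and (II) right after the $m$-th visit, $Q(s,a)=\tfrac1m\sum_{\ell=1}^m\tilde v_\ell(s,a)$ and the increment satisfies $\Delta_{s,a}=\tfrac1m\bigl(\tilde v_m(s,a)-Q_{\mathrm{old}}(s,a)\bigr)$, where $Q_{\mathrm{old}}$ is the value just before that visit.

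The heart of the argument is the $\Delta_{s,a}$ recursion. First I would note that the reconstruction line (highlighted in cyan in Algorithm~\ref{alg:bootcfr}) makes $\pi^n_{\mathrm{eval}}(s')=\texttt{softmax}\bigl(Q_{\mathrm{old}}(s',*),\tau=1/(\mathrm{CNT}(s')-1)\bigr)$ coincide with the policy that was actually used at $s'$ during this traversal: inside the recursive call the child formed its policy from $Q(s',*)$ and its count \emph{before} incrementing, and after the call that pre-increment count is exactly $\mathrm{CNT}(s')-1$ while $Q_{\mathrm{old}}(s',*)$ is the child's pre-update value. Then, applying the inductive hypothesis (II) at the child, $\mathrm{CNT}(s')\cdot\Delta_{s',a'}=\tilde v_n(s',a')-Q_{\mathrm{old}}(s',a')$; and since $\nabla_{s,a,a'}=Q_{\mathrm{old}}(s',a')-Q_{\mathrm{old}}(s,a)$, the bracketed quantity $\nabla_{s,a,a'}+\mathrm{CNT}(s')\Delta_{s',a'}$ telescopes to $\tilde v_n(s',a')-Q_{\mathrm{old}}(s,a)$. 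Averaging against the probability vector $\pi^n_{\mathrm{eval}}(s',\cdot)$ and dividing by $\mathrm{CNT}(s)=m$ gives $\Delta_{s,a}=\tfrac1m\bigl(\sum_{a'}\pi^n(a'\mid s')\tilde v_n(s',a')-Q_{\mathrm{old}}(s,a)\bigr)=\tfrac1m(\tilde v_m(s,a)-Q_{\mathrm{old}}(s,a))$, which is (II); a one-line telescoping of $Q_{\mathrm{new}}=Q_{\mathrm{old}}+\Delta_{s,a}$ then yields $Q(s,a)=\tfrac1m\sum_{\ell\le m}\tilde v_\ell(s,a)$.

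For (I): on $s$'s $n$-th visit \bootcfr{} sets $\pi^n(s)=\texttt{softmax}(Q(s,*),\tau=1/\mathrm{CNT}(s))$ with $\mathrm{CNT}(s)=m-1$ prior visits, so by (II) the exponent is $(m-1)Q(s,a)=\sum_{\ell=1}^{m-1}\tilde v_\ell(s,a)$ — the cumulative sampled counterfactual value; by Lemma~\ref{hedgerewards}, Hedge on cumulative counterfactual values equals Hedge on cumulative counterfactual regrets, which is precisely the ES-MCCFR current policy. The first-visit case $\mathrm{CNT}(s)=0$ uses the standard convention that zero cumulative regret yields the uniform policy, matching ES-MCCFR's initialization, and the argument is schedule-agnostic (a general $\tau_n$ merely rescales the exponent). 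Finally, once (I) holds at every infostate, \bootcfr{} and ES-MCCFR draw children from the identical kernels $T^\pi(\cdot\mid h,a)$ with identical $\pi$, and both expand every traverser action $a\in A(s)$, so a direct coupling shows the two algorithms induce the same distribution over traversed subtrees; together with (I), this is the claim.

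I expect the main obstacle to be bookkeeping rather than any deep idea: lining up every pre-/post-increment count and every pre-/post-recursion snapshot of $Q$ so that $\pi^n_{\mathrm{eval}}$ is genuinely the in-iteration policy and the telescoping in (II) is exact, all while indexing counterfactual values and counts \emph{per infostate} (each infostate is reached a policy-dependent number of times per iteration) rather than per global iteration. The induction must be arranged so that the child's invariant (II) is available at precisely the point where the parent's update consumes $\Delta_{s',a'}$ and $\mathrm{CNT}(s')$.
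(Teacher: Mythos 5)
Your proof is correct and rests on the same essential ingredients as the paper's --- the shift-invariance of Hedge (Lemma~\ref{hedgerewards}), the identity ``cumulative $=$ count $\times$ average,'' and the telescoping of the $\Delta_{s,a}$ recursion --- but it is organized differently, and the difference is worth noting. The paper disposes of this lemma in a few lines: traversal is identical by construction, and the policy claim follows from the assertion that ``$Q(s,a)$ is the average reward from taking action $a$ at infostate $s$.'' That assertion is really the content of Theorem~\ref{bootcfridentical}, whose proof in turn cites this lemma, so the paper's presentation is mildly circular. Your joint induction --- carrying invariant (I) (policy identity) and invariant (II) (the $Q$-as-running-average identity together with the closed form $\Delta_{s,a}=\tfrac1m(\tilde v_m(s,a)-Q_{\mathrm{old}}(s,a))$) simultaneously, with (II) at the children feeding the telescoping that establishes (II) at the parent, and (II) at the parent feeding (I) via Lemma~\ref{hedgerewards} --- proves both facts at once and removes that circularity, effectively merging Lemma~\ref{equivalentexploration} with Theorem~\ref{bootcfridentical}. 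You also attend to two details the paper glosses over: the first-visit case $\mathrm{CNT}(s)=0$ (resolved by the uniform-policy convention) and the per-infostate, per-visit indexing of the cumulative quantities, which is the right bookkeeping since ES-MCCFR only accumulates regret at infostates actually reached. One small imprecision: in the telescoping step you write $\nabla_{s,a,a'}=Q_{\mathrm{old}}(s',a')-Q_{\mathrm{old}}(s,a)$, dropping the sampled reward $r$; this is harmless for non-terminal children under the no-intermediate-reward convention, but for terminal children the term $r=u_i(z)$ must survive into the bracket so that the base case $\tilde v=u_i(z)$ comes out correctly (as in the paper's extended definition $r(s,a)=u_i(z)$ at terminal observations).
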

\begin{proof}
    First, note that game-tree traversal is identical to external-sampling MCCFR by construction. For a given player, we branch all their actions and sample all opponent actions from the current policy. 
    Additionally, while external-sampling MCCFR uses softmax on cumulative regrets, by Lemma \ref{hedgerewards}, this is equivalent to softmax on cumulative rewards. Note that 
    the policy in \bootcfr{} is given by
    \begin{align*}
        \pi^n(s) &= \texttt{softmax}\left(Q(s, *), \tau = \frac{1}{\text{CNT}(s)}\right)\\
        &= \texttt{softmax}\left(Q(s, *) \cdot \text{CNT}(s)\right).
    \end{align*}
    
    As $Q(s, a)$ is the average reward from taking action $a$ at infostate $s$, it follows that $Q(s, a) \cdot \text{CNT}(s, a)$ is the cumulative reward. Further, since we branch all actions anytime we visit an infoset, then $\text{CNT}(s, a) = \text{CNT}(s)$ and the  policy in \bootcfr{} is equivalent to a softmax on cumulative rewards.
\end{proof}

\begin{theorem}
    \label{bootcfridentical}
\bootcfr{} and external-sampling MCCFR are identical.
\end{theorem}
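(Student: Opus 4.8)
The plan is to establish the equivalence by induction on the height of the game tree, showing that at every infostate $s$, the quantity $\Delta_{s,a}$ accumulated by \bootcfr{} equals the incremental sampled counterfactual reward that ES-MCCFR adds to its cumulative regret table, and hence (via the preceding lemmas) that the two algorithms maintain identical current policies at every iteration. Lemmas~\ref{hedgerewards} and~\ref{equivalentexploration} already do most of the conceptual work: the first lets us freely pass between cumulative regrets and cumulative rewards, and the second pins down that tree traversal (branch own actions, sample opponent actions from the current policy) and the policy-from-$Q$ computation coincide once we know the $Q$-values agree. So the remaining content is purely about the Q-value bookkeeping.

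First I would set up notation carefully: fix an iteration $n$ and trace one call to \textsc{BootCFR} on $(h,s)$. The key object is $\Delta_{s,a}$, which \bootcfr{} builds as a $\pi^n_{\mathrm{eval}}(s',\cdot)$-weighted average over successor actions $a'$ of $\frac{1}{\mathrm{CNT}(s)}\bigl(\nabla_{s,a,a'} + \mathrm{CNT}(s')\cdot\Delta_{s',a'}\bigr)$. The first step is to observe that $\mathrm{CNT}(s')\cdot\Delta_{s',a'}$ is, by the inductive hypothesis, exactly the cumulative (undiscounted) sampled counterfactual reward attributable to the subtree below $(s',a')$, so that $\nabla_{s,a,a'} + \mathrm{CNT}(s')\cdot\Delta_{s',a'}$ telescopes into the sampled return from playing $a$ at $s$ and then $a'$ at $s'$; averaging this against $\pi^n_{\mathrm{eval}}(s',\cdot)$ — which, crucially, is the softmax at the $Q$-values \emph{as they were before the recursive call}, with temperature $1/(\mathrm{CNT}(s')-1)$, i.e. the current policy $\pi^n(s')$ — recovers exactly the Monte-Carlo estimate of $Q^{\pi^n}(s,a)$ that ES-MCCFR samples. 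Dividing by $\mathrm{CNT}(s)$ converts the cumulative quantity into the running-average update, matching the $Q(s,a)\gets Q(s,a)+\Delta_{s,a}$ step. Then one checks the base case (terminal histories, where $\Delta_{s,a}=0$ and there is no subtree contribution) and confirms that the cyan line using $Q_{\mathrm{old}}$ rather than $Q_{\mathrm{new}}$ is precisely what makes the evaluation policy equal the current (pre-update) policy rather than the next one — this is the single substantive difference from \cfr{}, and it is exactly what aligns the Monte-Carlo estimate with ES-MCCFR's use of the current joint policy.

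Having shown the $Q$-tables (equivalently, cumulative rewards, equivalently via Lemma~\ref{hedgerewards} cumulative regrets) evolve identically across all iterations, the conclusion follows: by Lemma~\ref{equivalentexploration} the policies and traversals coincide at iteration $n$ given equal $Q$-values at iteration $n$, the induction on $n$ closes, and therefore \bootcfr{} and ES-MCCFR produce the same sequence of current policies (and average policies), i.e.\ they are identical as stochastic processes under a common random seed.

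The main obstacle I anticipate is not any single calculation but getting the indexing of $\mathrm{CNT}$ exactly right through the recursion — in particular, distinguishing $\mathrm{CNT}(s')$ at the moment it is read inside the parent's loop from its value after the child's own increment, and verifying that $\mathrm{CNT}(s')-1$ in the $\pi^n_{\mathrm{eval}}$ line indeed equals the visit count that defines the \emph{current}-iteration policy at $s'$. A secondary subtlety is handling the case where a child infostate $s'$ is reachable from multiple parent actions or on multiple sampled paths within the external-sampling scheme; perfect recall rules out the former within a single traversal, and I would argue the latter does not arise because for the acting player every action is branched exactly once per visit to $s$, so $\mathrm{CNT}(s,a)=\mathrm{CNT}(s)$ as already noted in Lemma~\ref{equivalentexploration}. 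Once these accounting points are nailed down, the equivalence is essentially a definitional unwinding.
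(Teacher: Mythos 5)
Your proposal is correct and follows essentially the same route as the paper: after invoking Lemmas~\ref{hedgerewards} and~\ref{equivalentexploration} for the policy/traversal equivalence, the paper likewise inducts over the game tree to show $\Delta_{s,a} = \frac{r(s,a) - Q(s,a)}{\mathrm{CNT}(s)}$, using exactly the telescoping of $\nabla_{s,a,a'} + \mathrm{CNT}(s')\cdot\Delta_{s',a'}$ into $r(s',a') - Q(s,a)$ and the $\pi(\cdot\mid s')$-weighted decomposition of the sampled counterfactual reward. (One small wording slip: $\mathrm{CNT}(s')\cdot\Delta_{s',a'}$ is $r(s',a') - Q(s',a')$, not the sampled reward itself, but your subsequent telescoping and running-average steps make clear you are computing the right quantity.)
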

\begin{proof}
   \bootcfr{} and external-sampling MCCFR are identical regarding traversal of the game tree and computation of the current policy (Lemma \ref{equivalentexploration}). 
Thus,  it suffices to show that the updates to cumulative/average rewards are identical.

    Let $W$ denote the set of terminal states that are reached on a given iteration of \bootcfr{} or external-sampling MCCFR. For any $z \in W$, we define $u_i(z)$ to be the reward for player $i$ at terminal history $z$.
    Consider current history $h$, current infostate $s$, and action $a$. Let $h'$ be the history reached on the current iteration after taking action $a$. For external-sampling MCCFR, the sampled counterfactual regret of taking action $a$ at infostate $s$ is given by
    \[\tilde{r}(s, a) = \sum_{z \in W} u_i(z)(\eta_i^\pi(h', z) - \eta_i^\pi(h, z)).\]
   
 By Lemma~\ref{hedgerewards}, when using Hedge, we can consider only the sampled counterfactual rewards
    \[r(s, a) = \sum_{z \in W} u_i(z)\cdot \eta^\pi(h', z).\]

    While the sampled counterfactual reward is typically only defined for non-terminal observations, for the sake of the proof, we extend the definition to terminal observations, letting $r(s, a) = u_i(z)$, where $z$ is the terminal history corresponding to observation $h$.
    
    We will prove that $\Delta_{s, a} = \frac{r(s, a) - Q(s, a)}{\text{CNT}(s)}$ for all $s$ that we visit on a given iteration and all actions $a$.

    We proceed by induction. First, we have this equivalence 
 for all terminal nodes, since $\Delta_{s, a} = 0$ at a terminal node.
    Now, consider an arbitrary infostate $s$ and action $a$, and suppose that $\Delta_{s', a'} = \frac{r(s', a') - Q(s', a')}{\text{CNT}(s', a')}$ for all actions $a'$ and $s'$ reachable from $s$ after playing action $a$. By our inductive hypothesis, we have 
    \begin{align*}
        &\nabla_{s, a, a'} + \text{CNT}(s^\prime)\cdot\Delta_{s^\prime, a^\prime} \\
        &= Q(s', a') - Q(s, a) + \text{CNT}(s^\prime) \cdot \frac{r(s', a') - Q(s', a')}{\text{CNT}(s')}\\
        &= r(s', a') - Q(s, a).
    \end{align*}
    
    Additionally, we have 
    \begin{align*}
        r(s, a) &= \sum_{z \in W} u_i(z)\cdot \eta_i^\pi(h', z)\\
        &= \sum_{z \in W}u_i(z) \cdot \left[\sum_{a' \in A(s')} \pi(a'|s') \cdot \eta_i^{\pi}(h''_{a'}, z)\right]\\
        &= \sum_{a' \in A(s')} \pi(a'|s') \sum_{z \in W}u_i(z) \eta_i^{\pi}(h''_{a'}, z)\\
        &= \sum_{a' \in A(s')} \pi(a'|s') \cdot r(s', a'),
    \end{align*}
    where $h''_{a'}$ is the history reached after playing action $a'$ at history $h'$. It follows that
    \begin{align*}
        \Delta_{s, a} &= \sum_{a' \in A(s')} \frac{1}{CNT(s)} \cdot \pi(a'|s') \cdot \left[\nabla_{s, a, a'} + \text{CNT}(s^\prime) \cdot\Delta_{s^\prime, a^\prime} \right]\\
        &= \sum_{a' \in A(s')} \frac{1}{CNT(s)} \cdot \pi(a'|s') \cdot \left[r(s', a') - Q(s, a) \right]\\
        &= \frac{1}{\text{CNT}(s)} \cdot \left[-Q(s, a) + \sum_{a' \in A(s')} \pi(a'|s') \cdot r(s', a') \right]\\
        &= \frac{r(s, a) - Q(s, a)}{CNT(s)},
    \end{align*}
    as desired. The Q-update becomes
    \begin{align*}
        Q(s, a) &\gets Q(s, a) + \frac{r(s, a) - Q(s, a)}{CNT(s)}\\
        &= \frac{(CNT(s) - 1) \cdot Q(s, a) + r(s, a)}{CNT(s)},
    \end{align*}
    and corresponds precisely to updating the average Q-value with the external-sampling sampled counterfactual reward.
\end{proof}

\begin{corollary}
\label{cfrequivalence}
\cfr{} and External-Sampling MCCFR are identical, up to minor differences in the calculation of an action's expected utility/reward.
\end{corollary}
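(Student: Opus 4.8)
The plan is to reduce everything to Theorem~\ref{bootcfridentical}, which already establishes that \bootcfr{} is identical to external-sampling MCCFR. Consequently it suffices to compare \cfr{} (Algorithm~\ref{alg:maxcfr}) directly against \bootcfr{} (Algorithm~\ref{alg:bootcfr}) and show that the two coincide except for the single place the exposition already flags in cyan. I would begin by lining up the two pseudocodes step by step: they share the same preconditions, the same outer loop over episodes, the same current-policy computation $\pi^n(s) = \texttt{softmax}\br{Q(s,*),\ \tau = 1/\text{CNT}(s)}$, the same visit-count increment $\text{CNT}(s)\gets\text{CNT}(s)+1$, and the same traversal pattern (branch every action $a \in A(s)$, then recurse on the sampled successor $s'$). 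In particular, in both algorithms $\nabla_{s,a}$ (and, in \bootcfr{}, $\nabla_{s,a,a'}$) is computed inside GetChild / ModGetChild from the child Q-values \emph{before} the recursive call mutates them, so the ``old'' child Q-values entering the continuation term are the same in both.

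Next I would isolate the one genuine discrepancy. In \bootcfr{}, the continuation value folded into $\Delta_{s,a}$ is the $\pi^n_{\mathrm{eval}}$-average $\sum_{a'\in A(s')}\pi^n_{\mathrm{eval}}(s',a')\sq{\nabla_{s,a,a'} + \text{CNT}(s')\cdot\Delta_{s',a'}}$, where $\pi^n_{\mathrm{eval}}(s') = \texttt{softmax}\br{Q_{\mathrm{old}}(s',*),\ \tau = 1/(\text{CNT}(s')-1)}$ is exactly the policy that infostate $s'$ used on this iteration. In \cfr{}, GetChild instead selects the single greedy action $a^* = \argmax_{a'} Q(s',a')$ and the update uses $\nabla_{s,a} + \text{CNT}(s')\cdot\nabla_{s',a^*}$, i.e.\ the same quantity evaluated at the greedy child action rather than averaged over the child's current policy. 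Equivalently, \cfr{} replaces the expectation $\mathbb{E}_{a'\sim\pi^n}\sq{\,\cdot\,}$ that ES-MCCFR uses to define an infostate's (counterfactual) value backup with a maximum $\max_{a'}\sq{\,\cdot\,}$ — precisely the ``minor difference in the calculation of an action's expected utility/reward'' asserted by the corollary, and exactly the substitution that makes the \cfr{} backup mirror the $\max_{a'}Q(s',a')$ bootstrap used by BQL. Once this single substitution is made, \bootcfr{}'s recursion collapses to $\Delta_{s,a} = \frac{1}{\text{CNT}(s)}\br{\nabla_{s,a,a^*} + \text{CNT}(s')\cdot\Delta_{s',a^*}}$, which is verbatim the \cfr{} update, so the algorithms agree everywhere else; composing with Theorem~\ref{bootcfridentical} yields the claim.

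The main obstacle is the bookkeeping in the line-up step: one must verify that the visit counts $\text{CNT}(s)$ and $\text{CNT}(s')$, the $\text{CNT}(s')-1$ appearing in the evaluation-policy temperature, the role of $Q_{\mathrm{old}}$ versus $Q_{\mathrm{new}}$, and the order in which $Q(s',*)$ is read versus written all line up between the two algorithms, so that the \emph{only} residual difference is expectation-versus-max in the child value. This is routine but must be done carefully because the recursive calls mutate $Q$ in place; a clean way to organize it is the same backward induction over the sampled subtree used in the proof of Theorem~\ref{bootcfridentical}, carrying the inductive invariant $\Delta_{s,a} = \frac{r^{\max}(s,a) - Q(s,a)}{\text{CNT}(s)}$ where $r^{\max}$ is the ``max-backed-up'' analogue of the sampled counterfactual reward $r(s,a)$ there. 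I would also add a one-line remark that this difference is genuinely minor in the relevant regime: under a greedy-in-the-limit temperature schedule $\pi^n_{\mathrm{eval}}(s')$ concentrates on $\argmax_{a'}Q(s',a')$, so the two backups agree asymptotically.
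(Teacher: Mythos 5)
Your proposal takes essentially the same route as the paper: the paper's proof is a one-paragraph observation that \bootcfr{} and \cfr{} differ only in the cyan line of Algorithm~\ref{alg:bootcfr}, with the softmax evaluation policy replaced by a hardmax (i.e., expectation over the child's policy replaced by a greedy max), composed implicitly with Theorem~\ref{bootcfridentical}. One detail where your account diverges and which is exactly the bookkeeping subtlety you flag: the paper specifies that the hardmax is taken with respect to $Q_{\mathrm{new}}(s',*)$, the child Q-values \emph{after} the recursive update on the current iteration (this is what the returned $\nabla_{s',a^*}$ at the end of the MAXCFR function corresponds to, since its final $\arg\max$ is computed after the loop mutates $Q(s',\cdot)$), whereas you attribute the greedy choice to the $a^*$ computed inside GetChild from the pre-recursion Q-values.
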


\begin{proof}
\arxiv{Spell it out since we change the formatting and not just a single line}
One can verify that there is a single difference between \bootcfr{} and \cfr{}, where the line highlighted in cyan in Algorithm~\ref{alg:bootcfr} is replaced by
\[\pi_{\mathrm{eval}}^n(s') \gets \texttt{hardmax}\br{Q_{\mathrm{new}}\br{s', *} \cdot \text{CNT}(s')},\]
where $\texttt{hardmax}$ places all probability mass on the entry/action with the highest value. Thus, instead of calculating the sampled reward of taking action $a$ at infostate $s$ with respect to the current softmax policy, \cfr{} calculates the sampled reward with respect to a greedy policy that always selects the action with the greatest cumulative reward/smallest cumulative regret. This greedy policy is taken with respect to the new Q-values
 after they have been updated on the current iteration.  

Importantly, the current policy and cumulative policy of all players is still calculated using softmax on cumulative rewards. It is only the calculation of sampled rewards that relies on this new hardmax policy.
\end{proof}

Finally, we can relate MAX-CFR to \alg{}. We have the following simple relationship.
\begin{theorem}
    Up to $\epsilon$-exploration, \alg{} reduces to MAX-CFR in the case where all $(s, a)$ pairs are detected as nonstationary.
\end{theorem}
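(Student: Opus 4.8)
The statement is a structural claim: when the child-stationarity detector returns \texttt{True} at every $(s,a)$ pair, the \alg{} pseudocode in Algorithm~\ref{alg:mainalg} collapses to the MAX-CFR pseudocode in Algorithm~\ref{alg:maxcfr}, modulo the $\epsilon$-exploration that MAX-CFR does not have. The plan is therefore to walk through Algorithm~\ref{alg:mainalg} line by line under the hypothesis \texttt{DETECT}$(s,a,H(s,a),p)=\textbf{True}$ for all $s,a$ (equivalently, ``$s$ is not stationary'' evaluates to True everywhere), and check that each branch reduces to the corresponding MAX-CFR line.

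First I would observe that the guard \texttt{if DETECT$(s,a,H(s,a),p)$ OR $a = a_{traj}$} becomes vacuously true for \emph{every} action $a\in A(s)$, so the recursive call \alg{}$(h',s',Q,\gamma,t,a^*,H,p,\epsilon)$ is made for all children — exactly the unconditional \texttt{for $a \in A(s)$} loop with a recursive call in MAX-CFR. Next, the test \texttt{if s is not stationary} is also always true, so the correction term $\nabla_{s,a} \gets \nabla_{s,a} + \text{CNT}(s')\cdot\nabla_{s',a^*}$ is always applied; this is precisely the line $\nabla_{s,a}\gets\nabla_{s,a}+\text{CNT}(s')\cdot\nabla_{s',a'}$ in Algorithm~\ref{alg:maxcfr}, with $a^*$ playing the role of $a'$ (both are defined via $\argmax_a Q(s',a)$ inside \texttt{GetChild}). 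Then the Q-update $Q(s,a)\gets Q(s,a)+\frac{1}{\text{CNT}(s)}\nabla_{s,a}$ matches MAX-CFR's $Q(s,a)\gets Q(s,a)+\alpha\nabla_{s,a}$ with $\alpha = 1/\text{CNT}(s)$, and both return $\nabla_{s,a_{opt}}$ / $\nabla_{s,a^*}$ with the same $\argmax$ definition. The only remaining discrepancy is the temperature in the \texttt{softmax} defining $\pi^n(s)$: \alg{} uses $\tau = 1/(\tau_n\cdot\text{CNT}(s))$ whereas MAX-CFR uses $\tau = 1/\text{CNT}(s)$; I would note these coincide when $\tau_n = 1$ (consistent with the convention adopted in Appendix~\ref{sec:connect_cfr}), or more generally that the $\tau_n$ schedule is an inessential hyperparameter that can be folded into either algorithm, so this is not a substantive difference.

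The one genuine gap is that \alg{} additionally samples $a_{traj}$ from $\pi^n(s)$ \emph{with $\epsilon$-exploration} and uses it in the guard; since the guard is already satisfied by \texttt{DETECT} for all $a$, the value of $a_{traj}$ is irrelevant to which children get expanded, but the $\epsilon$-exploration does perturb the \emph{policy} $\pi^n$ used for the game-tree traversal and the averaging, in a way MAX-CFR's pure softmax policy does not. This is exactly why the statement says ``up to $\epsilon$-exploration.'' I would make this precise by noting that setting $\epsilon = 0$ in Algorithm~\ref{alg:mainalg} makes the two algorithms literally line-for-line identical (given $\tau_n=1$), and conclude.

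I expect the main obstacle to be purely expository rather than mathematical: the claim has no real content beyond a careful diff of the two pseudocode listings, so the ``hard part'' is simply being precise about the two cosmetic mismatches (the $\tau_n$ factor in the temperature and the presence of $a_{traj}$/$\epsilon$-exploration) and stating exactly what ``reduces to'' means — namely, identical game-tree traversal, identical current policy (up to $\epsilon$), and identical Q-value updates, which by Corollary~\ref{cfrequivalence} then also ties back to external-sampling MCCFR.
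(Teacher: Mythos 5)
Your proposal is correct and matches the paper's argument, which simply observes that the claim follows from the definition of Algorithm~\ref{alg:mainalg} with the $\epsilon$-exploration removed; your line-by-line diff is just a more careful expansion of that same observation.
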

\begin{proof}
    This follows  from the definition of Algorithm \ref{alg:mainalg}, where removing the $\epsilon$-exploration  recovers MAX-CFR.
\end{proof}

\begin{theorem}
\label{maxcfrisnoregret}
With high probability, MAX-CFR minimizes regrets at a rate of $O\br{\frac{1}{\sqrt{N}}}$.
\end{theorem}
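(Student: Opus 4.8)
The plan is to reduce to the existing regret guarantee for external-sampling MCCFR and then pay only a lower-order price for the single structural difference between \cfr{} and ES-MCCFR isolated earlier. Theorem~\ref{bootcfridentical} shows that \bootcfr{} is literally ES-MCCFR, and Corollary~\ref{cfrequivalence} shows that \cfr{} differs from \bootcfr{} in exactly one place: when a continuation value is passed up from a child $s'$ to its parent, \cfr{} uses $\mathtt{hardmax}(Q_{\mathrm{new}}(s',*)\cdot\mathrm{CNT}(s'))$ — the greedy policy over the \emph{just-updated cumulative} counterfactual values at $s'$ — instead of the current softmax/Hedge policy. Since ES-MCCFR is known to satisfy $\tfrac1N R^N_i = O(1/\sqrt N)$ with high probability \cite{lanctot2009monte}, built on the Zinkevich et al.\ \cite{zinkevich2008regret} decomposition of overall regret into a sum of per-infostate immediate counterfactual regrets, it suffices to show that this decomposition survives the hardmax substitution with only an additive correction that does not change the $1/\sqrt N$ rate.

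The argument is an induction over the game tree from terminal histories upward, carrying the hypothesis that every infostate strictly below $s$ has been played with average (counterfactual-weighted) immediate regret $O(1/\sqrt N)$. The key observation is that because Hedge is shift-invariant (Lemma~\ref{hedgerewards}) and \cfr{} branches every action of the acting player at every visited infostate, $\mathrm{CNT}(s',a')=\mathrm{CNT}(s')$ and $Q_{\mathrm{new}}(s',a')\cdot\mathrm{CNT}(s')$ is exactly the cumulative sampled counterfactual reward at $(s',a')$; hence $\max_{a'}Q_{\mathrm{new}}(s',a')$ equals $\tfrac1{\mathrm{CNT}(s')}$ times the cumulative reward of the best fixed action in hindsight at $s'$ — that is, the ES-MCCFR continuation value $\tfrac1{\mathrm{CNT}(s')}\sum_m \tilde v^{\pi^m}(s')$ plus the realized average immediate regret at $s'$. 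By the inductive hypothesis the latter term is $O(1/\sqrt{\mathrm{CNT}(s')})$, so the value \cfr{} feeds to the parent differs from the value ES-MCCFR would feed by $O(1/\sqrt{\mathrm{CNT}(s')})$. In the regime of Appendix~\ref{sec:connect_cfr} (no non-terminal rewards, $\gamma=1$) this discrepancy is not rescaled as it moves up the tree, so after at most $D$ levels, and summed over the $\mathrm{CNT}(s)$ visits to each of the $|S|$ infostates, the running total perturbation to the value sequence fed into the Hedge instance at any infostate is $O(\sqrt N)$ with a game-size-dependent constant. Since Hedge's regret changes by at most the cumulative perturbation of its payoff sequence, summing immediate regrets over $\{s:P(s)=i\}$ still gives $R^N_i=O(\sqrt N)$ in expectation — the same rate as ES-MCCFR.

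Finally, I would convert to a high-probability statement exactly as in \cite{lanctot2009monte}: the sampled counterfactual values are bounded (utilities are bounded, $\gamma\le1$) and, along each trajectory, form a bounded-difference martingale relative to the true counterfactual values, so Azuma--Hoeffding gives that with probability $1-\delta$ the cumulative sampled regret lies within $O(\sqrt{N\log(|S|A/\delta)})$ of its mean, which is absorbed into the $O(\sqrt N)$ bound. Combining, with high probability $\tfrac1N R^N(s,a)=O(1/\sqrt N)$ for all $(s,a)$, which is the claim (and, via the folk theorem, makes the average policy an $O(1/\sqrt N)$-Nash equilibrium in two-player zero-sum games).

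\textbf{Main obstacle.} The delicate step is the inductive control of the hardmax backup in the second paragraph: the quantity being maximized is itself the cumulative-regret object whose growth is being bounded, so the estimate is self-referential, and a careless recursion would compound the per-level slack multiplicatively into a $(1+c)^D$ blow-up. What makes it go through — and what must be checked carefully — is that the slack enters \emph{additively} at each level because $\gamma\le1$ and there are no intermediate rewards to rescale continuation values, so the depth-$D$ recursion contributes only an $O(1/\sqrt{N})$-rate correction rather than an exponential one; one must also verify that replacing softmax-on-current-cumulative-values by hardmax-on-updated-cumulative-values does not alter which trajectories are sampled (it does not — sampling uses the unchanged current policy) and is compatible with the external-sampling counterfactual weighting, which holds since player $i$'s own reach probability along the sampled path is $1$.
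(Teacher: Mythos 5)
Your overall architecture (isolate the single hardmax-vs-softmax difference via Theorem~\ref{bootcfridentical} and Corollary~\ref{cfrequivalence}, then plug into the Zinkevich per-infostate regret decomposition, then Azuma for the high-probability statement) matches the paper's framework, but the central step is a genuinely different argument and it has a gap. You treat the hardmax backup as a \emph{two-sided} perturbation of the ES-MCCFR backup, claiming the value \cfr{} passes to the parent differs from the value \bootcfr{} would pass by $O(1/\sqrt{\mathrm{CNT}(s')})$, namely the realized average immediate regret at $s'$. That identification is only valid in one direction. The quantity $\max_{a'} Q_{\mathrm{new}}(s',a')\cdot\mathrm{CNT}(s')$ is a \emph{be-the-leader} value: the be-the-leader lemma guarantees it is at least the best-fixed-action-in-hindsight value (hence at least the Hedge-policy value minus $O(\sqrt{N})$), but there is no matching upper bound --- on adversarial payoff sequences, which is exactly the regime here since the opponent's policy is changing, the greedy-continuation value can exceed the played-policy value by $\Theta(1)$ per round, so the cumulative ``perturbation'' can be $\Theta(N)$, not $O(\sqrt{N})$. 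Your subsequent step, ``Hedge's regret changes by at most the cumulative perturbation of its payoff sequence,'' therefore rests on a bound you have not established and that is false in general. (The multiplicative-compounding issue you flag as the main obstacle is real but secondary; the one-sidedness of the perturbation is the fatal problem.)

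The paper avoids this entirely by never comparing the two backups. Instead it defines a \emph{modified} local regret $r^N_{MAXCFR}(s,a)$ whose comparator uses the greedy-continuation value $Q^{\pi^n_{s,a}}(s,a)$ while the baseline keeps the true played value; Lemma~\ref{lem:max} gives $\max_{a'}Q(s',a') \geq \E{a'\sim\pi}{Q(s',a')}$, so Lemma~\ref{maxcfrboundlow} concludes $r^N(s,a)\leq r^N_{MAXCFR}(s,a)$ --- only the one-sided inequality is ever needed. Then Lemma~\ref{maxcfrboundhigh} observes that \cfr{}'s Hedge instance at $s$ is run on exactly this greedy-continuation payoff sequence, so the Multiplicative Weights bound applies to $r^N_{MAXCFR}$ directly, with no perturbation analysis and no self-referential recursion up the tree; the Zinkevich decomposition then yields the $O(1/\sqrt{N})$ rate. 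To repair your argument you would either need to adopt this domination structure, or restrict your perturbation claim to the single direction in which it holds and show that the inflation of the comparator term is all the final bound requires.
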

\begin{proof}
Written in terms of Q-value notation, the local regrets that are minimized by CFR(Hedge) are given by
\[
r^N(s, a) = \eta_{-i}^{\pi}(s) \br{Q^{\pi}(s, a) - \E{a \sim \pi(s)}{Q^\pi(s, a)}}.
\]
Let 
\[
r^{N+}(s, a) = \max\br{0, r^N(s, a)}
\]

Define $\pi^{n}_{s, a}$ as the policy in which:
\begin{enumerate}
    \item All players except player $P(s)$ follow $\pi$ at all states in the game,
    \item Player $P(s)$ plays as necessary to reach $s$ and plays action $a$ at $s$, and 
    \item Player $P(s)$ plays the action with the highest Q value at $s'$, where $s'$ is the state directly following  $s$.
\end{enumerate} 

Policy $\pi^{n}_{s, a}$ is identical to the counterfactual target policy that CFR(Hedge) follows, except that $\pi^{n}_{s, a}$ alters the policy at the successor state $s'$ to follow the action with the highest Q value instead of sampling an action according to $\pi^n$. 
Analogously, define the modified MAXCFR local regrets as
\begin{align*}
r^N_{MAXCFR}(s, a) & = \eta_{-i}^{\pi}(s) \br{Q^{\pi^{n}_{s, a}}(s, a) - \E{a \sim \pi(s)}{Q^\pi(s, a)}} \\
r^{N+}_{MAXCFR}(s, a) & = \max\br{0, r^N_{MAXCFR}(s, a)}
\end{align*}

These regrets are identical to the standard CFR regrets $r^N(s, a)$,
 except for the fact that the target policy is modified to $\pi^{n}_{s, a}$.
\begin{lemma}
\label{lem:max}
For any $s', \pi$ and any possible $Q$ values,
\[
\max_a Q(s', a) - \E{a \sim \pi}{Q(s', a)} \geq 0.
\]
\end{lemma}
\begin{proof}
This follows immediately from the properties of the max operator.
\end{proof}
\begin{lemma}
\label{maxcfrboundlow}
$r^N(s, a) \leq r^N_{MAXCFR}(s, a)$.
\end{lemma}
\begin{proof}

By definition of $\pi^{n}_{s, a}$, we can write:
\begin{align*}
Q^{\pi^{n}_{s, a}}(s, a) &= Q^\pi(s, a) \\
& + \E{s' \sim T^\pi(s' \mid s, a)}{\max_{a'} Q(s', a')
- \E{a' \sim \pi}{Q(s', a'}}.
\end{align*}

Since the second term is strictly nonnegative by Lemma~\ref{lem:max},
we have $R^N_{MAXCFR}(s, a) > R^N(s, a)$.
\end{proof}

\begin{lemma}
\label{maxcfrboundhigh}
Let $\Delta$ be the difference between the lowest and highest possible rewards achievable for any player in $G$. Let $N_{s, a}$ denote the number of times MAX-CFR has visited $s, a$. We have
$R^N_{MAXCFR}(s, a) \leq \frac{\Delta}{\sqrt{N_{s, a}}}$.
\end{lemma}
\begin{proof}
Since we choose all policies over a softmax, all policies have full support and $\eta^\pi_{-1}(s) > 0$ for all $s$. Thus, $N_{s, a} = N$ in MAX-CFR.
Noting that MAX-CFR chooses its policy $\pi^{n}_{s, a}$ proportionally to $\exp\br{r^{N}_{MAXCFR}(s, a)}$, it follows from the Multiplicative Weights algorithm \cite{freund1997decision, arora2005fast}, that the regrets given by $r^N_{MAXCFR}(s, a)$ after $N$ iterations are bounded above by $\frac{\Delta}{\sqrt{N}}$ with high probability. 
\end{proof}
Notice that like CFR, the guarantee of Lemma~\ref{maxcfrboundhigh} holds \emph{regardless} of the strategy employed at all other $s, a$ in the game, so long as the policy at iteration $n$ chooses action $a$ at state $s$ proportionally to $\exp\br{r^{N}_{MAXCFR}(s, a)}$.
Since Lemma~\ref{maxcfrboundlow} shows that local regrets for MAXCFR form an upper bound on the standard CFR regrets (over the sequence of joint policies $\pi^1 \cdots \pi^N$), we can apply \citep[Theorem 3]{zinkevich2008regret} along with Lemma~\ref{maxcfrboundhigh} to show that the overall regret of the game is,
\begin{align*}
    r^G &\leq \norm{\mathcal{S}}\norm{\mathcal{A}} \max_{s, a} r^{N+}_{MAXCFR}(s, a)\\
    &\leq \norm{\mathcal{S}}\norm{\mathcal{A}} \frac{\Delta}{\sqrt{N}} = O\br{\frac{1}{\sqrt{N}}}
\end{align*}

\end{proof}

\section{Convergence of ABCs}
\arxiv{If there isn't a major mistake, the below proof doesn't require a perfect oracle. It just requires one sided error (which we have). It is still written under the assumption of a perfect oracle though.}

\label{sec:abcsconvergence}

{\bf Theorem 4.4.}
{\em Assume that Algorithm~\ref{alg:mainalg} tracks separate $Q(s, a)$ values for stationary and nonstationary updates and has access to a perfect oracle for detecting \cs{}. Then, the average policy $\lim_{N\rightarrow \infty} \frac{1}{N} \sum_{n=1}^N \pi_n$ in Algorithm~\ref{alg:mainalg} converges to a Nash equilibrium in a two-player zero-sum game with high probability.}
\begin{proof}
Call our current game $G$. We will prove convergence to a Nash equilibria in $G$ by showing  the average policy in Algorithm~\ref{alg:mainalg} has vanishing local regret. %
This is sufficient to show  the algorithm as a whole has no-regret and thus converges to a Nash equilibrium \citep[Theorem 2]{zinkevich2008regret}.
 
  We first define a perfect oracle. At every episode $N$, a  perfect oracle has full access to the specification of  game $G$ and  the  joint policy, $\pi_1 \cdots \pi_N$, of all players in each episode $1$ through $N$.
As per Section~\ref{sec:measuringchildstationarity}, define $\sigma^N_A$ as the uniform distribution over $\{\pi_1 \cdots \pi_{\floor*{N/2}}\}$ and $\sigma^N_B$ as the uniform distribution over $\{\pi_{\floor*{N/2} + 1} \cdots \pi_N\}$.
Define the following two distributions:
\begin{align*}
    D_1^N(h) & = \E{\pi \in \sigma^N_A}{T_{\pi}(h^\prime \mid s, a)} \\
    D_2^N(h) & = \E{\pi \in \sigma^N_B}{T_{\pi}(h^\prime \mid s, a)}.
\end{align*} 

Define the  total variation distance  between these two distributions \cite{billingsley2012probability},  as
\[
\Delta_N = \sum_{h \in \mathcal{H}} \norm{D^N_1(h) - D^N_2(h)}.
\] 

A perfect detector  will return that $s, a$ is child stationary  at iteration $N$ if and only if $\Delta_N = 0$.
 
We assume separate Q-value tables for the stationary and nonstationary updates, so that the updates do not interfere with each other. To determine the policy
at each iteration (or episode) $n$, we chose $\pi^n(a) \propto Q(s, a)$ where $Q$ is the nonstationary Q-table if $s, a$ fails child stationarity and the stationary table otherwise. The result of this is that, with respect to a given $s, a$, updates done at \cs{} iterations have no impact on updates done at non \cs{} iterations and vice versa.

To prove Theorem~4.4, we proceed by induction. As the base case, consider any terminal infostate $s$ of the game tree where there are no actions. In this case,
 all policies are no-regret. 
For the inductive hypothesis, consider any infostate $s$ and assume that \alg{} guarantees no-regret for any $s^\prime, a^\prime$ where $s^\prime$ is a descendent of $s$ and $a^\prime$ is a valid action at $s^\prime$. To show  is 
that \alg{} also guarantees the no-regret property at $s, a$.
Formally, define $\eta_{-i}^{\pi}(s) = \sum_{h \in H(s)} \eta_{-i}^{\pi}(h)$. We  write the {\em  local regret} for a policy $\pi$ as $r^\pi(s, a) = \max\br{0, \eta_{-i}^{\pi}(s) \br{ Q^{\pi}(s, a) - \E{a' \sim \pi(s)}{Q^\pi(s, a')}}}$.

Consider the sequence $\br{d_n}_n$, where $d_n \in \{0, 1\}$ represents whether the oracle reports whether $s, a$ is \cs{} after $n$ iterations  (or episodes)
of training. 
There are three  cases to consider: $\br{d_n}_n$  converges to $0$ and fails to satisfy \cs{} in the limit, converges to $1$ and satisfies \cs{} in the limit, or fails to converge (e.g., perhaps it oscillates between \cs{} and not \cs{}).

\paragraph{Case 1:} $s, a$ fails child stationarity in the limit (i.e., $\br{d_n}_n \rightarrow 0$).  

In Appendix~\ref{sec:maxcfr}, we show that \alg{} runs MAX-CFR at every $s, a$ that does not satisfy \cs{} according to the detector. Lemma~\ref{maxcfrboundhigh} shows that running MAX-CFR at a given state and action $s, a$ achieves vanishing local regret at $s, a$, and the assumption of the perfect oracle means that \alg{} will asymptotically converge to MAXCFR at this particular $s, a$. Combined with our inductive hypothesis, this  guarantees that \alg{} is no-regret for $(s, a)$ and all its descendent infostate, action pairs. %

\paragraph{Case 2:} $s, a$ satisfies child stationarity in the limit  (i.e., $\br{d_n}_n \rightarrow 1$). Let $\overline{\pi}^N$ denote the average joint policy at episode $N$.
The Multiplicative Weights / Hedge algorithm \cite{freund1997decision, arora2005fast} guarantees that selecting $\pi_{n+1}(s) \propto \text{softmax}\br{Q^{\overline{\pi}_N}(s, *)}$ %
 has $\lim_{N\rightarrow \infty} r^{\overline{\pi}_N}(s, a) = 0$. When the detector successfully detects $s, a$ as child stationary,  Algorithm~\ref{alg:mainalg} performs exactly such an update at every infostate visited (except that it uses estimated $\hat{Q}^{\overline{\pi}_N(s, a)}$ values).
We also show that the estimated $\hat{Q}^{\overline{\pi}_N}(s, a)$ values converge to the true ${Q}^{\overline{\pi}_N}(s, a)$ values, resulting in Hedge / Multiplicative weights performing the softmax update on  the 
correct counterfactual regrets. We do so as follows.
\begin{lemma}
Define $G^{*}(s, a) = \lim_{N\rightarrow \infty} G^{\overline{\pi}^N}(s, a)$. If $\br{d_t}_t \rightarrow 1$, then $G^{*}(s, a)$ is guaranteed to exist.
\end{lemma}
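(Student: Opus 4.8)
The plan is to reduce the claim to the convergence of a single sequence of probability vectors. Conditional on the fixed game $G$, the game $G^{\overline{\pi}^N}(s,a) = G^{\sigma^N}(s,a)$ — where $\sigma^N$ is the uniform distribution over $\{\pi_1, \dots, \pi_N\}$ — is completely determined by its initial distribution over hidden states,
\[ D^N(h') \;:=\; \E{\pi \in \sigma^N}{T_\pi(h' \mid s, a)} \;=\; \frac1N \sum_{n=1}^N T_{\pi_n}(h' \mid s, a), \]
a point in the simplex $\Delta(\mathcal{H})$. So it suffices to show that $\br{D^N}_N$ converges in $\Delta(\mathcal H)$; its limit then identifies $G^*(s,a)$ as $G$ started from that limiting hidden‑state distribution. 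I would state this reduction first.

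Next I would exploit the hypothesis $\br{d_t}_t \to 1$. Since the perfect oracle reports \cs{} at iteration $N$ exactly when $\Delta_N = 0$, and $\Delta_N$ is a sum of non‑negative terms over $h \in \mathcal H$, there is an $N_0$ with $D_1^N = D_2^N$ for all $N \ge N_0$. Writing $S_M = \sum_{n=1}^M T_{\pi_n}(\cdot \mid s,a)$ and $a = \lfloor N/2 \rfloor$, the equality $S_a / a = (S_N - S_a)/(N - a)$ rearranges to $S_N / N = S_a / a$, i.e.\ $D^N = D^{\lfloor N/2 \rfloor}$ for every $N \ge N_0$. Iterating this identity — halving the index until it first drops below $N_0$ — shows that for all $N \ge N_0$ the vector $D^N$ lies in the fixed finite set $\{D^m : \lceil (N_0-1)/2 \rceil \le m \le N_0 - 1 \} \subseteq \Delta(\mathcal H)$; in particular the dyadic subsequence $D^{2^j}$ is eventually equal to a constant $c$.

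Finally I would upgrade ``takes finitely many values'' to ``convergent.'' Because $D^N$ is a running average of probability distributions, consecutive terms satisfy $D^{N+1} - D^N = \tfrac{1}{N+1}\br{T_{\pi_{N+1}}(\cdot\mid s,a) - D^N}$, hence $\norm{D^{N+1} - D^N}_1 \le 2/(N+1) \to 0$. A sequence that for $N \ge N_0$ takes values in a finite set and whose consecutive differences tend to $0$ must be eventually constant: once $2/(N+1)$ is below the least distance between two distinct values of that finite set, $D^{N+1}$ cannot differ from $D^N$. Therefore $D^N = c$ for all large $N$, the limit exists, and $G^*(s,a)$ is well defined.

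The main obstacle is exactly this last step. The oracle condition in isolation only forces the recursion $D^N = D^{\lfloor N/2 \rfloor}$, which admits integer‑indexed solutions that oscillate among finitely many values (so no convergence follows from it alone); what rescues the argument is that $D^N$ is not an arbitrary solution of that recursion but a genuine Cesàro average of points in a bounded set, which forces the successive differences to vanish. Combining the two facts closes the proof. The only other care needed is routine: the bookkeeping in the ``iterate the halving'' step (checking which residues relative to the window $[1,N_0)$ are reachable), and the harmless fact that everything lives in the compact set $\Delta(\mathcal H)$.
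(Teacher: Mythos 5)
Your proof is correct, and it takes a genuinely different route from the paper's. The paper embeds the games $G^{\overline{\pi}^N}(s,a)$ in a compact metric space and bounds the consecutive distance $d(G^{\overline{\pi}^N}(s,a), G^{\overline{\pi}^{N+1}}(s,a)) \le \Delta_N + \Delta_{N+1} + \tfrac{1}{N+1}$ via the triangle inequality and convexity of total variation, then concludes the sequence is Cauchy because this bound tends to $0$. You instead work concretely with the initial hidden-state distribution $D^N$ in the simplex, use the \emph{exactness} of the perfect oracle ($\Delta_N = 0$ for all $N \ge N_0$, not merely $\Delta_N \to 0$) to extract the recursion $D^N = D^{\lfloor N/2 \rfloor}$, confine $D^N$ to a finite set, and combine that with the $O(1/N)$ Cesàro increment bound to conclude the sequence is \emph{eventually constant}. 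The comparison favors your version on rigor: vanishing consecutive differences do not imply the Cauchy property (the paper's bound sums to a divergent series even with $\Delta_N \equiv 0$), so the paper's final step is a gap as written, whereas your finite-range-plus-vanishing-increments argument closes it and delivers a strictly stronger conclusion. What the paper's framing would buy, if its Cauchy step were repaired, is robustness to approximate child stationarity ($\Delta_N \to 0$ rather than $\Delta_N = 0$), which your argument deliberately forgoes since the theorem assumes a perfect oracle anyway; the only caveats in your write-up are cosmetic (the paper's $\overline{\pi}^N$ is the average of $\pi_A^N$ and $\pi_B^N$, which differs from the uniform average over all $N$ policies by floor-rounding when $N$ is odd, and the exact index window of your finite set needs the routine bookkeeping you already flag), neither of which affects the conclusion.
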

\begin{proof}
While $G^{*}(s, a)$ is not a well defined subgame in general, we show that it exists if $s, a$ is asymptotically \cs{}. Considering a vectorized form of the payoff entires in a finite action, normal form game, the set of all games can be
written  as a subset of $\mathcal{R}^q$, for a suitable $q$, and  forms a compact metric space.
 $\br{d_n}_n \rightarrow 1$  implies $\Delta_N \rightarrow 0$ by construction. %
Let $\pi_A^N, \pi_B^N$ represent the average joint policies as defined in Section~\ref{sec:measuringchildstationarity} at iteration $N$. 
$\overline{\pi}^N$ is, by definition, the average joint policy between $\pi_A^N$ and $\pi_B^N$. Using the symmetry of the total variation distance along with the triangle inequality, we have
\begin{align*}
& d\br{G^{\overline{\pi}^N}(s, a), G^{\overline{\pi}^{N+1}}(s, a)} \\
& = d\br{G^{\frac{1}{2} \pi_A^N + \pi_B^N}(s, a), G^{\frac{1}{2} \pi_A^{N+1} + \pi_B^{N+1}}(s, a)} \\
& \leq d\br{G^{\frac{1}{2} \pi_A^N + \frac{1}{2}\pi_B^N}(s, a), G^{\pi_A^{N}}(s, a)} \\ & + d\br{G^{\pi_A^{N+1}(s, a)}, G^{\frac{1}{2} \pi_A^{N+1} + \frac{1}{2}\pi_B^{N+1}}(s, a)} \\
& + d\br{G^{\pi_A^N}(s, a), G^{\pi_A^{N+1}}(s, a)}\\
& \leq \Delta_N + \Delta_{N+1} + \frac{1}{N+1}
\end{align*}
where we have that $d\br{G^{\frac{1}{2} \pi_A^N + \frac{1}{2}\pi_B^N}(s, a), G^{\pi_A^{N}}} < d\br{G^{\pi_B^{N}}(s, a), G^{\pi_A^{N}(s, a)}}$ due to the convexity of the total variation distance \cite{billingsley2012probability}. Additionally,  $d\br{G^{\pi_A^N}(s, a), G^{\pi_A^{N+1}}(s, a)} \leq \frac{1}{N+1}$ because, by construction,  $\pi_A^N$ and $\pi_A^{N+1}$ are identical in all but at most $\frac{1}{N+1}$ fraction of the joint policies in the support. Since $\lim_{N\rightarrow \infty} d\br{G^{\overline{\pi}^N}(s, a), G^{\overline{\pi}^{N+1}}(s, a)} = 0$, the sequence of games given by $G^{\overline{\pi}^N}(s, a)$ forms a Cauchy sequence and $G^{*}(s, a)$ is  guaranteed to exist since Cauchy sequences   converge in compact metric spaces.
\end{proof}

By the assumption that $G$ is a two-player zero-sum game, $G^{*}(s, a)$ must also be a two-player zero-sum game.
By our inductive hypothesis, \alg{} has no regret for any $s^\prime, a^\prime$ that
 is a descendent of $s, a$. This, combined with the existence of $G^{*}(s, a)$ and \citep[Theorem 2]{zinkevich2008regret} means that $\lim_{N\rightarrow \infty} \overline{\pi}^N$ converges to a Nash equilibrium of $G^{*}(s, a)$. 
 
 The minimax theorem \cite{vonneumann1928zur} states that in two-player zero-sum games, players receive the same payoff, known as the minimax value of the game, \emph{regardless of which Nash equilibrium} is discovered. Call this value $Q^*$. Furthermore, by the assumption that $\lim \br{d_t}_t \rightarrow 1$, the immediate rewards received after taking action $a$ at state $s$ must converge to some value $r^*$. As such, the estimated Q values must  converge to the correct value $\hat{Q}^{\overline{\pi}_N}(s, a) = r^* + \gamma Q^*$, as desired.
 
\paragraph{Case 3:} Imagine that $s, a$ is neither stationary nor nonstationary in the limit.
Let 
\[
A^N_{nonstat} = \{d_t \mid d_t = 0\}, \quad A^N_{stat} = \{d_t \mid d_t = 1\}.
\] 

Each of $\lim_{N\rightarrow \infty} \norm{A^N_{stat}}$ and $\lim_{N\rightarrow \infty} \norm{A^N_{nonstat}}$ must both diverge, since
 if one of them were only finitely large, then this would contradict the assumption that $(d_t)_t$ does not converge.

As described in our setup, because we track separate Q values for stationary and nonstationary updates and use a perfect  oracle, the updates done in iterations $A^N_{stat}$ have no impact on the updates done on $A^N_{nonstat}$ and vice versa. As such, consider the subset of timesteps given by $A^N_{nonstat}$. By direct reduction to Case 1, the distribution over joint policies given by sampling uniformly over $\{\pi_k \mid k \in A^N_{nonstat}\}$ 
must satisfy the no-regret property for $s, a$. 
Analogously,  the distribution over joint policies given by sampling uniformly over $\{\pi_k \mid k \in A^N_{stat}\}$  must satisfy the no-regret property for $s, a$
by direct reduction to Case 2.

As the inductive hypothesis is satisfied for both partitions $A^N_{stat}$ and $A^N_{nonstat}$,  we can continue to induct independently for each partition.
The induction terminates at the initial infostate $s_0$ of our game because we assume finite depth.  Suppose  we are left 
at this step with $K$ nonoverlapping partitions of $\{1 \cdots N\}$. Denote them $A^N_1 \cdots A^N_K$, and analogously define $\pi_N^k = \frac{1}{\norm{A^N_k}} \sum_{n \in A^N_k} \pi_n$. %
 By our inductive hypothesis and the minimax theorem, we are guaranteed that $\lim_{N\rightarrow \infty} r^{\pi_N^k}(s, a) = 0$, for all $s, a, k$. 
Since $\overline{\pi}^N = \frac{1}{N} \sum_k \norm{A^N_k} \pi_N^k$, we can write
\begin{align*}
\max_a r^{\overline{\pi}^N}(s, a) & = \max_a \frac{1}{N} \sum_k \norm{A^N_k} r^{\pi_N^k}(s, a) \\
& \leq \frac{1}{N} \sum_k \norm{A^N_k} \max_a r^{\pi_N^k}(s, a).
\end{align*}

Lastly, since $\lim_{N\rightarrow \infty} r^{\pi_N^k}(s, a) = 0$ for all $s, a, k$, it must also be true that $r^{\overline{\pi}^N}(s, a) \rightarrow 0$ as desired.

\end{proof}

{\bf Theorem 4.5.} {\em Given an appropriate choice of significance levels $\alpha$, Algorithm~\ref{alg:mainalg} asymptotically converges to the optimal policy with only a worst-case $O(A)$ factor slowdown compared to BQL in an MDP.}
\begin{proof}
We choose the significance levels $\alpha$ across different infostates as follows. As described in Algorithm~\ref{alg:mainalg}, at each infostate $s$ encountered, \alg{} samples a single action $a_{traj}$ and branches this action, regardless of whether $s, a_{traj}$ is determined to be child stationary or not.

Define the set of infostates $s_0, \dots, s_t$, where $s_0$ is the initial infostate and $s_{k}$ is the infostate observed after playing the corresponding $a_{traj}$ action at $s_{k-1}$. In other words, this is the trajectory of infostates visited from the beginning to the end of the game, taking the actions sampled by \alg{}, independent of the child stationarity detection. This ``trajectory'' is the set of infostates that \alg{} would have updated even if the environment had been fully child stationary and the direct analogue of the trajectory that BQL would have updated. Note that the chosen ``trajectory'' may change during every iteration of the learning procedure.

We select the significance levels $\alpha$ as follows. For each infostate $s_i$ on the constructed trajectory described above, set $\alpha \leq \frac{1}{A^d}$, where $A$ is the maximum number of actions at any infostate and $d$ is the depth of infostate $s_i$. At all other infostates reached during the current learning iteration, set $\alpha \leq \frac{1}{A}$.

This choice of significance levels guarantees only an $O(A)$ factor slowdown compared to BQL. 
Let $X_{\alg}(d)$ count the number of infostates of depth $d$ that \alg{} updates at each iteration.

Since the environment is an MDP, an infostate is updated only if the detection algorithm hit a false positive on every ancestor of the infostate that was not on the trajectory path.
We  bound this probability as follows.
Consider an infostate $s$ with depth $d$.
Let $d^*$ be the highest depth infostate in the ancestry of $s$ that
 was on the ``trajectory'' path. 
As false positives asymptotically occur with probability $\alpha$ in a Chi-Squared test \cite{pearson1900criterion}, the probability that $s$ is visited approaches
\[
\frac{1}{A^{d^*}} \cdot \prod_{i=d^*+1}^d \frac{1}{A} = A^{-d}.
\]

There are only at most $A^d$ nodes at depth $d$ in the tree, and thus, in expectation, we can expect at most $A^{-d} A^d = O(1)$ extra updates at level $d$ in the tree, or $O(D)$ total extra updates for a game tree of depth $D$. 
Since MAXCFR is a contraction operator over the potential function which bounds the regret (Appendix~\ref{sec:maxcfr}), additional, unnecessary updates at infostates falsely deemed to be nonstationary by the detector in Algorithm~\ref{alg:detector} will not hurt the asymptotic convergence rate, as MAXCFR also converges to the optimal policy in an MDP.

As our environment is acylic, \alg{} will visit $O(D)$ infostates in a game tree of depth $D$, the same as BQL. At each infostate, \alg{} updates every action whereas BQL only updates a single action, giving the additional $O(A)$ inefficiency relative to BQL.
\end{proof}

\section{Experimental Details}
\label{sec:hyperparameters}
\subsection{Testing Environments}

We present more detailed descriptions of our testing environments, including any modifications made to their standard implementations, below.

\subsubsection{Cartpole}

Cartpole is a classic control game from the OpenAI Gym, in which the agent must work to keep the pole atop the cart upright while keeping the cart within the boundaries of the game. While Cartpole has a continuous state space, represented by a tuple containing (cart position, cart velocity, pole angle, pole angular velocity), we discretize the state space for the tabular setting as follows:

\begin{itemize}
    \item \textbf{Cart position:} $10$ equally-spaced bins in the range $[-2.4, 2.4]$
    \item \textbf{Cart velocity:} $10$ equally-spaced bins in the range $[-3.0, 3.0]$
    \item \textbf{Pole angle:} $10$ equally-spaced bins in the range $[-0.5, 0.5]$
    \item \textbf{Pole angular velocity:} $10$ equally-spaced bins in the range $[-2.0, 2.0]$
\end{itemize}

Additionally, to make convergence feasible, we parameterize the game such that the infostates exactly correspond to the discretized states, rather than the full sequence of states seen and actions taken. Note that this means that our version of Cartpole does not satisfy perfect recall; to account for this and ensure that we are not repeatedly branching the same infostate, we prevent \alg{} and MAX-CFR from performing a CFR exploratory update on the same infoset twice in a given iteration.

The standard implementation of Cartpole is technically non-Markovian as each episode is limited to a finite number of steps ($200$ in the v0 implementation). To make the environment Markovian, we allow the episode to run infinitely, but we introduce a $1/200$ probability of terminating at any given round, thus ensuring that the maximum expected length of an episode is precisely $200$.

\subsubsection{Weighted Rock-Paper-Scissors}

Weighted rock-paper-scissors is a slight modification on classic rock-paper-scissors, where winning with Rock yields a reward of $2$ whereas winning with any other move only yields a reward of $1$ (a draw results in a reward of $0$). The game is played sequentially, but player 2 does not have knowledge of the move player 1 makes beforehand (possible via use of imperfect-information / infostates).

\subsubsection{Kuhn Poker}

Kuhn Poker is a simplified version of poker with only three cards – a Jack, Queen, and King. At the start of the round, both players receive a card (with no duplicates allowed). Each player antes $1$ – player $1$ then has the choice to bet or check an amount of $1$. If player $1$ bets, player $2$ can either call (with both players then revealing their cards) or fold. If player $1$ checks, player $2$ can either raise an amount of $1$ or check (ending the game). In the event that player $2$ raises, player $1$ then has the choice to call or fold, with the game terminating in either case.

We use the standard OpenSpiel implementation of Kuhn poker for all our experiments.

\subsubsection{Leduc Poker}

Leduc poker is another simplified variant of poker, though it has a deeper game tree than Kuhn poker. As with Leduc, there are three cards, though there are now two suits. Each player receives a single card, and there is an ante of $1$ along with two betting rounds. Players can call, check, and raise, with a maximum of two raises per round. Raises in the first round are two chips while they are four chips in the second round. 

Again, we use the standard OpenSpiel implementation of Leduc poker for all our experiments.

\subsubsection{Cartpole + Leduc Poker}

For our stacked environment, player $1$ first plays a round of Cartpole. Upon termination of this round of Cartpole, they then play a single round of Leduc poker against player $2$. Note that player $2$ only ever interacts with the Leduc poker environment.

We use the same implementations of Cartpole and Leduc poker discussed above, with the sole change being that we set Cartpole's termination probability to $1/100$ rather than $1/200$ to feasibly run MAX-CFR on the stacked environment. 

\subsection{Hyperparameters Used}

For all of the experiments in the main section of the paper, we use the same hyperparameters for all of the algorithms tested. We enumerate the hyperparameters used for BQL, MAX-CFR, and the MCCFR methods in Table \ref{fig:benchmark_hyperparameters}.

We tuned the hyperparameters for MCCFR methods, BQL, and \alg{} as to optimize performance over all five environments for which experiments were conducted.

\begin{table}[htbp]
\def\arraystretch{1.5}
\caption{Benchmark Hyperparameter Values}
\centering
\begin{tabular}{ll}
\toprule
\textup{Hyperparameters}       & Final Value\\
\midrule
Discount Factor & $\gamma = 1$ \\
BQL Temperature Schedule & $\tau_n = 10 \cdot \left(0.99\right)^{\lfloor n/50 \rfloor}$ \\
MAX-CFR Temperature Schedule & $\tau_n = 1$\\
OS-MCCFR $\epsilon$-greedy Policy & $\epsilon = 0.6$\\
\bottomrule
\end{tabular}
\label{fig:benchmark_hyperparameters}
\end{table}

For ABCs, we use the following set of hyperparameters, enumerated in Table \ref{fig:abcs_hyperparameters}. As noted in the main paper, we make a number of practical modifications to Algorithm \ref{alg:mainalg}. In particular, we use a fixed p-value threshold, allow for different temperature schedules for stationary/nonstationary infosets, choose not to use $\epsilon$-exploration, and only perform the stationarity check with some relatively small probability.

\begin{table}[htbp]
\def\arraystretch{1.5}
\caption{ABCs Hyperparameter Values}
\centering
\begin{tabular}{ll}
\toprule
\textup{Hyperparameters}       & Final Value\\
\midrule
Discount Factor & $\gamma = 1$ \\
Nonstationary Temperature Schedule & $\tau_n = 1$\\
Stationary Temperature Schedule & $\tau_n = \left(0.99\right)^{\lfloor n/20 \rfloor}$\\
Cutoff p-value & $\alpha_s = 0.05$\\
$\epsilon$-exploration & $\epsilon = 0$\\
Probability of Stationarity Check & $p_{check} = 0.05$\\
\bottomrule
\end{tabular}
\label{fig:abcs_hyperparameters}
\end{table}

\newpage
\subsection{Evaluation Metrics}

For all our experiments, we plot ``nodes touched'' on the x-axis, a standard proxy for wall clock time that is independent of any hardware or implementation details. Specifically, it counts the number of nodes traversed in the game tree throughout the learning procedure. On each environment, we run all algorithms for a fixed number of nodes touched, as seen in the corresponding figures.

For all multi-agent settings, we plot exploitability on the y-axis. For Cartpole, we plot regret, which is simply the difference between the reward obtained by the current policy and that obtained by the optimal policy ($200$ minus the current reward).

\subsection{Random Seeds}

We use three sequential random seeds $\{0, 1, 2\}$ for each experiment.

\section{Additional Experiments}
\label{sec:add_experiments}

\subsection{Stationarity Detection}

We plot percentage of infostates detected as nonstationary over time for each of the non-stacked environments in Figure \ref{fig:stat_detection}. We normalize the x-axis to plot the fraction of total runtime/nodes touched so that all environments can be more easily compared.

\begin{figure*}[!h]
    \centering
    \begin{subfigure}[t]{0.45\textwidth}
        \centering
        \includegraphics[width=\linewidth]{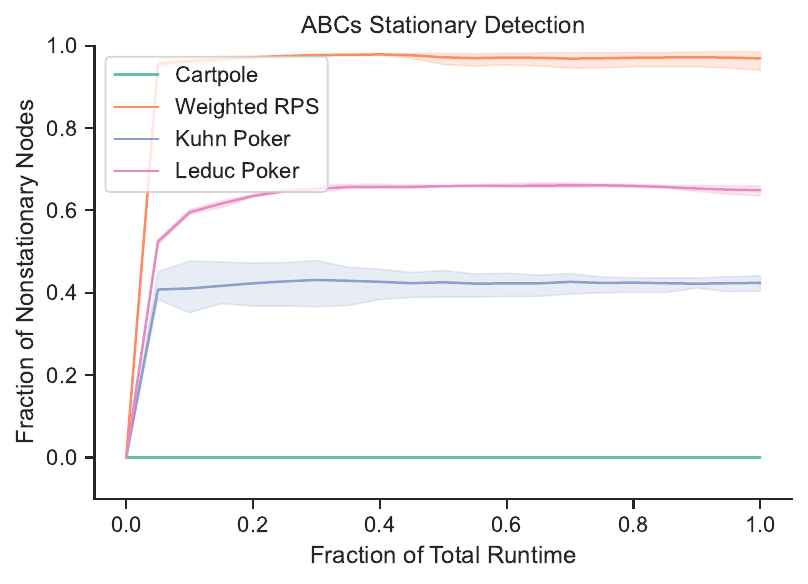} 
        \caption{Stationarity detection for Cartpole, RPS, Leduc poker, and Kuhn poker. The fraction of infostates detected as non-stationary is plotted over time.}
        \label{fig:stat_detection}
    \end{subfigure}
    \begin{subfigure}[t]{0.45\textwidth}
        \centering
        \includegraphics[width=\linewidth]{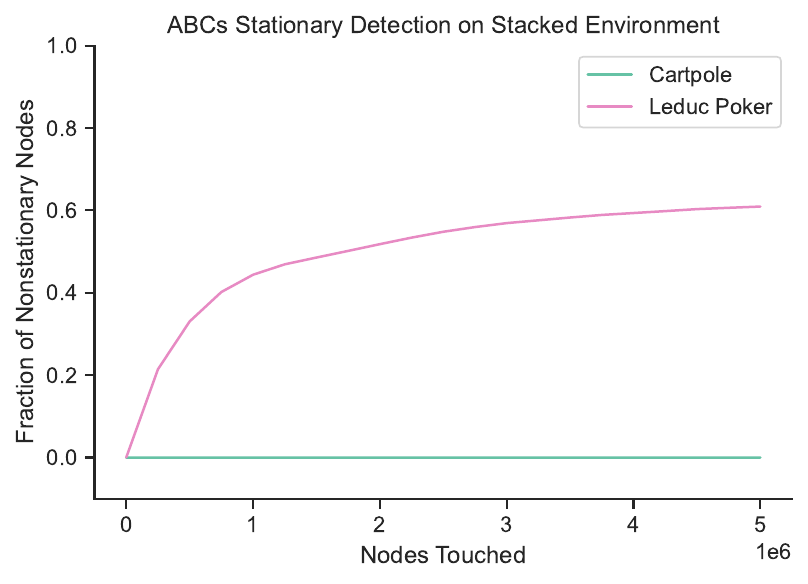} 
        \caption{Stationarity detection on stacked environment.}
        \label{fig:stacked_stat}
    \end{subfigure}
\end{figure*}

Note that the multi-agent environments possess varying levels of stationarity. The policies in weighted RPS are cyclical, meaning that the corresponding transition functions change throughout the learning procedure. In contrast, in our two poker environments, there are several transition functions that may stay fixed, either because the outcome of that round is determined by the cards drawn or because agents' policies become fixed over time. 

To take an example, in the Nash equilibrium of Kuhn poker, player 1 only ever bets in the opening round with a Jack or a King. Thus, if player 2 is in the position to call with a King, betting will necessarily yield the history in which players have cards (K, J) and player 2 calls player 1's initial bet.

To better visualize the performance of ABCs on our stacked Cartpole/Leduc environment, we plot the fraction of infostates detected as nonstationary in each environment separately in Figure \ref{fig:stacked_stat}.

\subsection{TicTacToe}

We also run experiments on TicTacToe. While TicTacToe is not a fully stationary game – as it is still a multi-agent setting – it is a perfect information game, meaning that BQL should still be able to learn the optimal policy of the game. As with Cartpole, we modify the infostates to depend only on the current state of the board. 

We plot results for ABCs and benchmarks in Figure \ref{fig:ttt}, with exploitability clipped at $10^{-5}$ to facilitate easier comparison. To accelerate convergence, we evaluate the current greedy policy rather than the average policy – note that last-iterate convergence implies average policy convergence. We additionally modify the temperature schedules for ABCs and BQL to accommodate the larger game tree. These are listed in Table \ref{fig:ttt_hyperparameters}.

\begin{table}[t]
\def\arraystretch{1.5}
\caption{TicTacToe Hyperparameter Values}
\centering
\begin{tabular}{ll}
\toprule
\textup{Hyperparameters}       & Final Value\\
\midrule
ABCs Non-\\stationary Temperature Schedule & $\tau_n = 1$\\
ABCs \\ Stationary Temperature Schedule & $\tau_n = 10 \cdot \left(0.99\right)^{\lfloor n/50 \rfloor}$\\
BQL Temperature\\ Schedule & $\tau_n = 10 \cdot \left(0.99\right)^{\lfloor n/100 \rfloor}$\\
\bottomrule
\end{tabular}
\label{fig:ttt_hyperparameters}
\end{table}

\begin{figure*}[!h]
    \centering
    \begin{subfigure}[t]{0.45\textwidth}
        \centering
        \includegraphics[width=\linewidth]{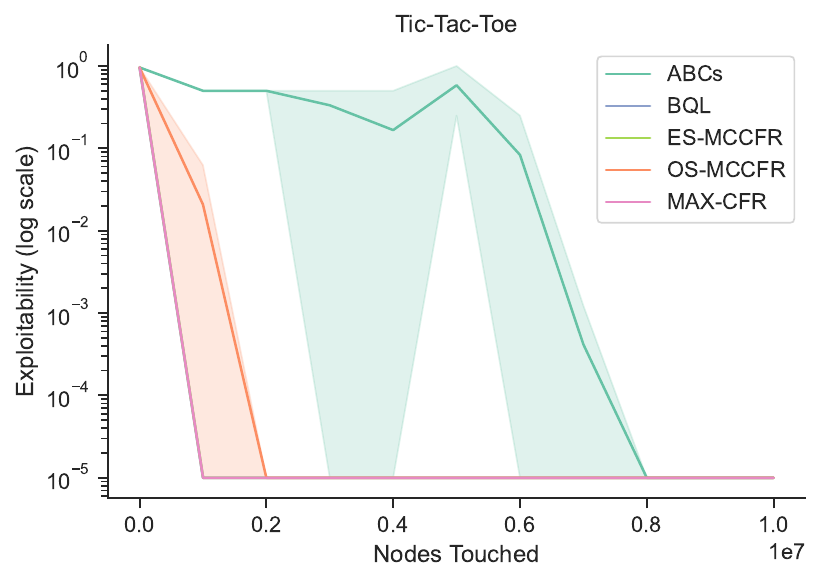}
        \caption{Performance on TicTacToe, exploitability is clipped at $10^{-5}$ and plotted on log scale.}
        \label{fig:ttt}
    \end{subfigure}
    \begin{subfigure}[t]{0.45\textwidth}
        \centering
        \includegraphics[width=\linewidth]{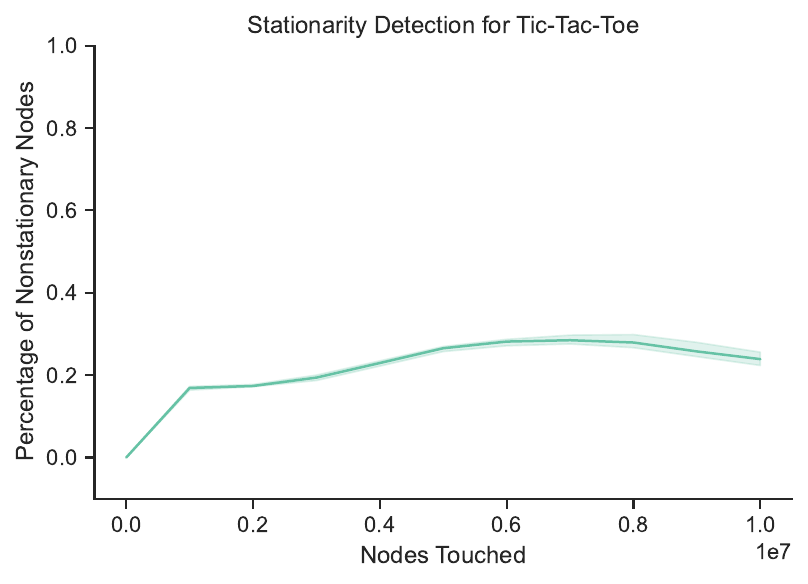}
        \caption{Nonstationary detection on TicTacToe.}
        \label{fig:ttt_stat}
    \end{subfigure}
\end{figure*}

\section{Additional Related Work}
We expand on the related work from Section~\ref{sec:related}.
\subsection{Methods from Game Theory}
While CFR guarantees polynomial time convergence in MDPs (assuming the MDP satisfies perfect recall), empirically it performs far slower than its corresponding reinforcement learning counterparts despite the same worst case bound. This is primarily because, in practice, reinforcement learning methods such as PPO \cite{schulman2017proximal} or DQN \cite{Mnih2015DQN} are able to learn reasonable policies in benchmarks such as the Atari Learning Environment \cite{bellemare13arcade} while exploring only a tiny fraction of the possible states in the environment. Meanwhile, CFR requires updating every infostate in the entire game tree at every iteration of the game.

\paragraph{Monte Carlo Counterfactual Regret Minimization} External sampling Monte-Carlo CFR (ES-MCCFR) and Outcome sampling Monte-Carlo CFR (OS-MCCFR) \cite{lanctot2009monte} are two methods that have been proposed to alleviate this issue. ES-MCCFR samples a single action for each player other than the player conducting the learning procedure (including the chance player) for each iteration, resulting in substantially fewer updates compared to the full version of CFR. However, because external sampling still requires performing a single update at every infostate of every player, it still requires $O(A^D)$ updates per iteration, where $A$ is the number of actions and $D$ is the maximum depth of the game tree, with respect to any single player. OS-MCCFR is similar to ES-MCCFR except that it only samples a single trajectory and then corrects its estimates of the counterfactual values using importance sampling. While this update is indeed a trajectory style update like Q-learning, the variances of the estimated values are exceedingly high as they involve dividing by the reach probability of reaching an infostate, which will be on the order of $O(A^{-D})$. As such, OS-MCCFR will still require on the order of $O(A^D)$ trajectories before it learns a reasonable policy.

Several methods have been proposed to empirically improve the asymptotic convergence rate with respect to the number of iterations. CFR+ \cite{tammelin2014solving} is a popular method that modifies the regret estimate to be an upper bound of the actual regrets. The same work also introduced alternating updates, which is only updating the regrets of each player every $P$ iterations, where $P$ is the number of players in the game. Linear CFR \cite{brown2019solving} and its follow up work greedy weights \cite{zhang2022equilibrium} modify the weighting scheme of the CFR updates. All the above methods empirically improve the convergence rate but maintain the worst case bound.
\subsection{Methods from Reinforcement Learning}
While reinforcement learning was designed for finding optimal policies on MDPs, many attempts have been made to adapt such algorithms to the multi-agent setting. Algorithms such as Q-learning and PPO are generally not guaranteed to converge to equilibria in such settings \cite{brown2019solving}.

\paragraph{AlphaStar} AlphaStar \cite{vinyals2019grandmaster} attempted to learn the real-time strategy game of Starcraft via reinforcement learning. Since Starcraft is a two-player imperfect information game, to combat the nonstationarity involved in playing multiple different opponents who required different strategies to defeat, AlphaStar trained against a league of agents instead of a single agent as is usually the case in self play. Such a league simulates the distribution of human adversaries AlphaStar is likely to encounter while playing on the Blizzard Starcraft ladder. While this method performed reasonable well in practice, it did not come with guarantees of convergence to the Nash equilibria of the game.

\paragraph{OpenAI Five} OpenAI Five \cite{openai2019five} was a similar project to learn DOTA 2, a popular multiplayer online battle arena video game. In order to combat the nonstationarity, OpenAI Five played in a restricted version of DOTA 2, in which aspects of the gave that involved imperfect information (such as wards which granted vision of enemy areas) were removed from the game. Indeed, while the resulting agent was able to beat some of the best human teams in the world online players quickly found strategies that coule exploit the agents.

\paragraph{RMAX} On the more theoretical side, RMAX \cite{brafman2002rmax} is a classic algorithm from reinforcement learning that achieves both optimal reward in an MDP and the minimax value of a stochastic game in polynomial time. RMAX operates by maintaining an optimistic estimate of the Q-values of each state which upper bounds the possible value of each state and thus encourages exploration of such states. This optimistic upper bound ensures However, there are several differences between RMAX and \alg{}. For starters, RMAX is not model free -- it requires a model of the environment to estimate the Q values. Additionally, unlike CFR (and by extension \alg), it is not Hannan consistent, meaning it will not play optimally against an arbitrary adversary in the limit. On a more practical level, RMAX leads to substantially more exploration than its related methods such as Q-learning by design, and as a result is used less often in practice compared to variants such as Q-learning which often are able to learn reasonable policies while exploring only a tiny fraction of the environment.
\paragraph{Nash Q-Learning} Nash Q-Learning \cite{hu2003nash} is an adaptation of the standard Q-learning algorithm finding the minimax value of stochastic game, which corresponds to the Nash equilibrium in two-player zero-sum games. Instead of doing a standard Q-learning update, Nash Q-learning calculates the Nash equilibrium of the subgame implied by the Q-values of all players and performs an update according to that policy, instead of the policy being currently followed by all players. However, there are several important differences between \alg{} and Nash Q-learning. Firstly, in our multi-agent learning setup, \alg{} is run for each player of the game, but the learning algorithms for each player are run separately, with no interaction between players except via the game itself. In contrast, Nash-Q requires a centralized setup in which players collectively choose their individual strategies as a function over the Q-values over \emph{all players in the game}. In this sense, Nash Q-learning is a collective algorithm for finding equilibria of the game as opposed to a learning algorithm trying to optimize the reward of each agent. Additionally, Nash Q-learning requires a Nash equilibrium solver for every update of the Q-values. While this is possible in polynomial time in a two-player zero-sum game, it is more expensive than the analogous update in \alg{} and Q-learning which is done in constant time.

\subsection{Unified Methods from Game Theory and Reinforcement Learning}

\paragraph{Magnetic Mirror Descent} Magnetic Mirror Descent (MMD) \cite{sokota2023unified} is similarly a unified algorithm capable of solving for equilibria in two-player zero-sum games and in single player settings. 
However, unlike \alg, MMD does not adaptively determine whether or not to branch infostates based on whether or not they are stationary. As such, it cannot simultaneously guarantee convergence to Nash in two-player zero-sum games and matching performance against BQL with the same set of hyperparameters.
MMD runs experiments on a variety of single and multi-agent environments, but requires different hyperparameters for each one, most importantly those governing how many infostates to explore. In contrast, \alg{} manages to roughly match the performance of Boltzmann Q-Learning in stationary environments and the performance of counterfactual regret minimization —— all with the same set of hyperparameters due to adaptively choosing how much of the tree to explore based on how stationary the infostate is.

\paragraph{Player of Games} Player of Games \cite{Schmid21PoG} is an algorithm that is able to simultaneously achieve superhuman status on many perfect and imperfect information games by combining lessons from both AlphaZero and counterfactual regret minimization. Instead of exploring the whole game tree as is the case with standard CFR, Player of Games expands a fixed number of infostates at each timestep. This is in contrast to \alg{} which performs an adaptive choice on how many children to expand depending on whether the infostate and action satisfies child stationarity.
As a result, PoG performs asymptotically slower than its counterparts designed for perfect information / single agent environments as a price for its generality. In contrast, we show that \alg{} provably has only a constant factor slowdown compared to its reinforcement learning counterpart in the limit case for a stationary environment by converging to updating the same set of infostates as BQL.

\end{document}